  \DeclareRobustCommand\onedot{\futurelet\@let@token\@onedot}
  \def\@onedot{\ifx\@let@token.\else.\null\fi\xspace}
\title{Distributed Low-rank Subspace Segmentation}
\author{
Ameet Talwalkar\textsuperscript{a}\\
\And
Lester Mackey\textsuperscript{b}\\
\And
Yadong Mu\textsuperscript{c}\\
\And
Shih-Fu Chang\textsuperscript{c}\\
\And
Michael I. Jordan\textsuperscript{a, d}\\
}
\begin{document}

\ifdefined\iccvformat
\title{Distributed Low-rank Subspace Segmentation}

\author{
Ameet Talwalkar\textsuperscript{a} \quad Lester Mackey\textsuperscript{b}
\quad Yadong Mu\textsuperscript{c}
\quad Shih-Fu Chang\textsuperscript{c} \quad Michael I. Jordan\textsuperscript{a}
\\
\noindent\textsuperscript{a}University of California, Berkeley \quad 
\noindent\textsuperscript{b}Stanford University \quad 
\noindent\textsuperscript{c}Columbia University \\
\tt\small{\{ameet, jordan\}@cs.berkeley.edu, lmackey@stanford.edu, \{muyadong, sfchang\}@ee.columbia.edu}
}

\maketitle

\else
\maketitle
{\centering
\vspace*{-0.8cm}
\noindent\textsuperscript{a} Department of Electrical Engineering and Computer Science, UC Berkeley \\
\textsuperscript{b} Department of Statistics, Stanford \\
\textsuperscript{c} Department of Electrical Engineering, Columbia \\
\textsuperscript{d} Department of Statistics, UC Berkeley\par\bigskip
\par
}
\fi

\begin{abstract}

Vision problems ranging from image clustering to motion
segmentation to semi-supervised learning can naturally be framed as
\emph{subspace segmentation} problems, in which one aims to  recover multiple
low-dimensional subspaces from noisy and corrupted input data.
Low-Rank Representation (LRR), a convex formulation of the subspace segmentation
problem, is provably and empirically accurate on small problems but does not
scale to the massive sizes of modern vision datasets.  Moreover, past work
aimed at scaling up low-rank matrix factorization is not applicable to LRR given its
non-decomposable constraints.  In this work, we 
propose a novel divide-and-conquer algorithm for large-scale subspace segmentation
that can cope with LRR's non-decomposable constraints and maintains LRR's strong
recovery guarantees.
This has immediate implications for the scalability of subspace
segmentation, which we demonstrate on a benchmark face recognition dataset and
in simulations.  We then introduce novel applications of LRR-based subspace
segmentation to large-scale semi-supervised learning for multimedia event
detection, concept detection, and image tagging.  In each case, we obtain
state-of-the-art results and order-of-magnitude speed ups.
\end{abstract}

\section{Introduction}

Visual data, though innately high dimensional, often reside in or lie close to
a union of low-dimensional subspaces.  These subspaces might reflect physical
constraints on the objects comprising images and video (e.g., faces under
varying illumination~\cite{BasriJa03} or trajectories of rigid
objects~\cite{TomasiKa92}) or naturally occurring variations in
production (e.g., digits hand-written by different
individuals~\cite{HastieSi98}).  \emph{Subspace segmentation}
techniques model these classes of data by recovering bases for the multiple
underlying subspaces~\cite{Gear1998,ssc}.  Applications
include image clustering~\cite{ssc}, segmentation of images, video, and
motion~\cite{YangWrMaSa08,CosteiraKa98,VidalMaSa05}, and affinity graph construction for
semi-supervised learning~\cite{zhuang12}.

One promising, convex formulation of the subspace segmentation problem is the
\emph{low-rank representation} (LRR) program of Liu et
al.~\cite{LiuLiYu10,LiuXuYa11}:
\begin{align}
\label{eq:lrr_opt}
(\hat{\Z}, \hat{\S}) = 
\argmin_{\Z,\S} \quad \norm{\Z}_*+\lambda\norm{\S}_{2,1} &  \\
\text{subject\, to}\quad \M = \M\Z + \S\,.\nonumber
\end{align}
Here, $\M$ is an input matrix of datapoints drawn from multiple subspaces, 
$\norm{\cdot}_*$ is the nuclear norm,
$\norm{\cdot}_{2,1}$ is the sum of the column $\ell_2$ norms,
and $\lambda$ is a parameter that trades off between these penalties.  LRR segments the columns of $\M$ into
subspaces using the solution $\hat{\Z}$, and, along with its extensions (e.g.,
LatLRR~\cite{LiuYa11} and NNLRS~\cite{zhuang12}), admits strong guarantees of
correctness and strong empirical performance in clustering and graph
construction applications.  However, the standard algorithms for solving
\eqref{eq:lrr_opt} are unsuitable for large-scale problems, due to their
sequential nature and their reliance on the repeated computation of costly
truncated SVDs.

Much of the computational burden in solving LRR stems from the nuclear norm penalty,
which is known to encourage low-rank solutions, so one might hope to leverage the large body
of past work on parallel and distributed matrix
factorization~\cite{GemullaNiHaSi11,RechtRe11,NiuReReWr11,YuHsSiDh12,MackeyTaJo11}
to improve the scalability of LRR.  Unfortunately, these techniques are tailored to 
optimization problems with losses and constraints that decouple
across the entries of the input matrix.  
This decoupling requirement is violated in the LRR problem due to the $\M = \M\Z + \S$
constraint of \eqref{eq:lrr_opt},
and this
non-decomposable constraint introduces new algorithmic and
analytic challenges that do not arise in decomposable matrix factorization problems. 

To address these challenges, we develop, analyze, and evaluate a provably accurate divide-and-conquer approach to large-scale subspace segmentation that specifically accounts for the non-decomposable structure of the LRR problem. Our contributions are three-fold:

\vspace{2mm}
\textbf{Algorithm:} We introduce a parallel, divide-and-conquer approximation algorithm for
LRR that is suitable for large-scale subspace segmentation problems. 
Scalability is achieved by dividing the original LRR
problem into computationally tractable and communication-free subproblems,
solving the subproblems in parallel, and combining the results using a technique
from randomized matrix approximation.
Our algorithm, which we call \fastmf-LRR, is based on the principles of the
Divide-Factor-Combine (\fastmf) framework \cite{MackeyTaJo11} for decomposable
matrix factorization but 
can cope with the non-decomposable
constraints of LRR.    

\textbf{Analysis:} 
We characterize the segmentation behavior of our new algorithm,
showing that \fastmf-LRR maintains the segmentation guarantees 
of the original LRR algorithm with high probability, even while enjoying substantial speed-ups over its namesake.
Our new analysis features a significant broadening of the original
LRR theory to treat the richer class of LRR-type subproblems that arise in \fastmf-LRR.
Moreover, since our ultimate goal is subspace segmentation and not matrix recovery, 
our theory guarantees correctness under a more substantial reduction of problem complexity than the work of~\cite{MackeyTaJo11} (see Sec.~\ref{sec:lrr-theory} for more details).

\textbf{Applications:}  We first present results on face
clustering and synthetic subspace segmentation to demonstrate that \fastmf-LRR
achieves accuracy comparable to LRR in a fraction of the time.  We then
propose and validate a novel application of the LRR
methodology to large-scale graph-based semi-supervised learning. 
While LRR has been used to construct affinity graphs for semi-supervised learning in the past~\cite{chengb11,zhuang12}, 
prior attempts have failed to scale to the sizes of real-world datasets.  
Leveraging the favorable computational properties of
\fastmf-LRR, we propose a scalable strategy for constructing such
subspace affinity graphs. 
We apply our methodology to a variety of computer vision tasks -- multimedia
event detection, concept detection, and image tagging -- demonstrating an order
of magnitude improvement in speed and accuracy that exceeds the state of the art.

The remainder of the paper is organized as follows. In
Section~\ref{sec:fast_algs} we first review the low-rank representation
approach to subspace segmentation and then introduce our novel \fastmf-LRR 
algorithm.  Next, we present our theoretical analysis of \fastmf-LRR in
Section~\ref{sec:theory}. Section~\ref{sec:experiments} highlights the accuracy
and efficiency of \fastmf-LRR on a variety of computer vision tasks. We present
subspace segmentation results on simulated and real-world data in
Section~\ref{ssec:sub_seg}.  In Section~\ref{ssec:semi_super} we present our
novel application of \fastmf-LRR to graph-based semi-supervised learning
problems, and we conclude in Section~\ref{sec:conclusion}.

\textbf{Notation}
Given a matrix $\M \in \reals^{m \times n}$, we define $\U_{M} \mSigma_{M} \V_{M}^\top$ as the compact singular value
decomposition (SVD) of $\M$, where $\rank{\M}=r$, $\mSigma_M$ is a diagonal
matrix of the $r$ non-zero singular values and $\U_M \in \reals^{m \times r}$
and $\V_M \in \reals^{n \times r}$ are the associated left and right singular
vectors of $\M$. We denote the orthogonal projection onto
the column space of $\M$ as $\P_M$.

\section{Divide-and-Conquer Segmentation}
\label{sec:fast_algs}
In this section, we review the LRR approach to subspace segmentation and present our
novel algorithm, \fastmf-LRR.  

\subsection{Subspace Segmentation via LRR}
\label{ssec:subspace_segmentation}
In the \emph{robust subspace segmentation} problem, we observe a matrix $\M =
\mL_0 + \S_0\in\reals^{m\times n}$, where the columns of $\mL_0$ are datapoints
drawn from multiple independent subspaces,\footnote{Subspaces are \emph{independent} if the dimension of their direct sum is the sum of their dimensions.}
and $\S_0$ is a column-sparse outlier matrix.
Our goal is to identify the subspace associated with each column of $\mL_0$,
despite the potentially gross corruption introduced by $\S_0$.  An important
observation for this task is that the projection matrix $\V_{L_0}\V_{L_0}^\top$
for the row space of $\mL_0$, sometimes termed the \emph{shape iteration matrix}, is block diagonal whenever the columns of $\mL_0$
lie in multiple independent subspaces~\cite{Gear1998}.  Hence, we can achieve accurate
segmentation by first recovering the row space of $\mL_0$.

The LRR approach of~\cite{LiuLiYu10} seeks to recover the row space of $\mL_0$ by solving
the convex optimization problem presented in \eqref{eq:lrr_opt}.  Importantly, the LRR
solution comes with a guarantee of correctness: the column space of $\hat{\Z}$
is exactly equal to the row space of $\mL_0$ whenever certain technical
conditions are met~\cite{LiuXuYa11} (see Sec.~\ref{sec:theory} for more details).  

Moreover, as we will show in this work, LRR is also well-suited to the
construction of affinity graphs for semi-supervised learning. 
In this setting, the goal is to define an affinity graph in which nodes
correspond to data points and edge weights exist between nodes drawn from the
same subspace.  LRR can thus be used to recover the block-sparse structure of
the graph's affinity matrix, and these affinities can be used for
semi-supervised label propagation.

\subsection{Divide-Factor-Combine LRR (DFC-LRR)}
We now present our scalable divide-and-conquer algorithm, called \fastmf-LRR,
for LRR-based subspace segmentation.  
\fastmf-LRR extends the principles of the \fastmf framework
of~\cite{MackeyTaJo11} to a new non-decomposable problem.  
The \fastmf-LRR algorithm is summarized in Algorithm
\ref{alg:dfc_lrr}, and we next describe each step in further detail.
\newline\newline\noindent
{\bf D step - Divide input matrix into submatrices:} \fastmf-LRR randomly
partitions the columns of $\M$ into $t$ $l$-column submatrices,
$\{\C_1,\ldots,\C_t\}$. For simplicity, 
we assume that $t$ divides $n$ evenly.
\newline\newline\noindent
{\bf F step - Factor submatrices in parallel:} \fastmf-LRR solves $t$ subproblems
in parallel.  The $i$th LRR subproblem is of the form
\begin{align}
\label{eq:lrr_opt_dfc}
\min_{\Z_i,\S_i} \quad \norm{\Z_i}_*+\lambda\norm{\S_i}_{2,1} & \\
\text{subject\, to}\quad \C_i = \M\Z_i + \S_i\,, \nonumber
\end{align}
where the input matrix $\M$ is used as a dictionary but only a subset of
columns is used as the observations.\footnote{An alternative formulation 
involves replacing both instances of $\M$ with $\C_i$ in 
\eqref{eq:lrr_opt}.  The resulting low-rank estimate $\hat \Z_i$ would have
dimensions $l \times l$, and the C step of \fastmf-LRR would compute a low-rank
approximation on the block-diagonal matrix
diag($\hat \Z_1, \hat \Z_2, \ldots, \hat \Z_t$).}
A typical LRR algorithm can be easily modified to solve \eqref{eq:lrr_opt_dfc} 
and will return a low-rank estimate $\hat\Z_i$ in factored form.
\newline\newline\noindent
{\bf C step - Combine submatrix estimates:} \fastmf-LRR generates a final
approximation $\Zprojmf$ to the low-rank LRR solution $\hat{\Z}$  by
projecting $[\hat\Z_1,\ldots,\hat\Z_t]$ onto the column space of $\hat \Z_1$.
This \emph{column projection} technique is commonly used to produce randomized 
low-rank matrix factorizations~\cite{KumarMoTa09} and was also employed by the \projmf algorithm of \cite{MackeyTaJo11}.
\newline
\newline
\noindent {\bf Runtime:} As noted in~\cite{MackeyTaJo11}, many state-of-the-art
solvers for nuclear-norm regularized problems like \eqref{eq:lrr_opt} have
$\Omega(mnk_M)$ per-iteration time complexity due to the rank-$k_M$ truncated
SVD required on each iteration.  \fastmf-LRR reduces this per-iteration
complexity significantly and requires just O$(mlk_{C_i})$ time for the $i$th
subproblem. Performing the subsequent column projection step is relatively
cheap computationally, since an LRR solver can return its solution in factored
form.  Indeed, if we define $k' \defeq\max_i k_{C_i}$, then the column
projection step of \fastmf-LRR requires only O$(mk'^2+lk'^2)$ time.

\begin{figure}[h]
\renewcommand*\footnoterule{}
\centering
\begin{minipage}[t]{.5\textwidth}
\begin{algorithm}[H]
   \caption{\fastmf-LRR}
   \label{alg:dfc_lrr}
\begin{algorithmic}
	 \STATE {\bfseries Input:} $\M$, $t$
	 \STATE $\{\C_i\}_{1\le i \le t}$ = \textsc{SampleCols}($\M$, $t)$ \\
   \textbf{do in parallel}
   \STATE\hspace{5mm} $\hat{\Z}_1$ = \textsc{LRR}$(\C_1, \M)$ \\
   \STATE\hspace{20mm} \vdots \\
   \vspace{1mm}
   \STATE\hspace{5mm} $\hat{\Z}_t$ = \textsc{LRR}$(\C_t, \M)$\\
   \textbf{end do}
	 \STATE $\Zprojmf$ = \textsc{ColProj}($[\hat{\Z}_1, \ldots, \hat{\Z}_t],\hat{\Z}_1$)
\end{algorithmic}
\end{algorithm}
\end{minipage}
\end{figure}

\section{Theoretical Analysis}
\label{sec:theory}
Despite the significant reduction in computational complexity, \fastmf-LRR provably maintains the strong theoretical guarantees of the LRR algorithm.
To make this statement precise, we first review the technical conditions for accurate row space recovery required by LRR.

\subsection{Conditions for LRR Correctness}
The LRR analysis of Liu et al.~\cite{LiuXuYa11} relies on two key quantities, the rank of the clean data matrix $\mL_0$
and the \emph{coherence}~\cite{Recht09} of the singular vectors $\V_{L_0}$.
We combine these properties into a single definition:
\begin{definition}[$(\mu,r)$-Coherence] \label{eq:coherence_0}
A matrix $\mL\in\R^{m\times n}$ is \emph{$(\mu,r)$-coherent} if $\rank{\mL} = r$ and 
\begin{equation*}
\frac{n}{r} \norm{\V_{L}^\top}_{2,\infty}^2  \leq \mu,
\end{equation*}
where $\norm{\cdot}_{2,\infty}$ is the maximum column $\ell_2$ norm.\footnote{Although~\cite{LiuXuYa11} uses the notion of column
coherence to analyze LRR, we work with the closely related notion of
$(\mu,r)$-coherence for ease of notation in our proofs. Moreover, we note that if a rank-$r$ matrix
$\mL\in \reals^{m \times n}$ is supported on $(1-\gamma) n$ columns then the
column coherence of $\V_L$ is $\mu$ if and only if $\V_L$ is $(\mu/(1-\gamma), r)$-coherent.}
\end{definition}
\noindent Intuitively, when the coherence $\mu$ is small, information is well-distributed across the rows of a matrix, and
the row space is easier to recover from outlier corruption.
Using these properties, Liu et al.~\cite{LiuXuYa11} established the following recovery guarantee for LRR.

\begin{theorem}[\cite{LiuXuYa11}] \label{thm:lrr}
Suppose that $\M = \mL_0 + \S_0 \in\reals^{m\times n}$ where $\S_0$ is supported on $\gamma n$ columns, $\mL_0$ is
$(\frac{\mu}{1-\gamma},r)$-coherent, and $\mL_0$ and $\S_0$ have
independent column support with $\range{\mL_0} \cap \range{\S_0} = \{\mbf0\}$.
Let $\hat\Z$ be a solution returned by LRR.
Then there exists a constant $\gamma^*$ (depending on $\mu$ and $r$) for which
the column space of $\hat\Z$ exactly equals the row space of $\mL_0$ whenever
$\lambda=3/(7\norm{\M}\sqrt{\gamma^*l})$ and $\gamma \leq \gamma^*$.
\end{theorem}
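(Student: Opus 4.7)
The plan is to follow the dual certificate strategy that is standard for proving exact recovery in nuclear-norm-plus-$\ell_{2,1}$ programs. First I would fix an \emph{oracle candidate}: set $\Z^\star \defeq \V_{L_0}\V_{L_0}^\top$, so that by construction the column space of $\Z^\star$ is exactly the row space of $\mL_0$. Since $\mL_0 = \mL_0\,\V_{L_0}\V_{L_0}^\top$, the residual $\S^\star \defeq \M - \M\Z^\star = \S_0(\mathbf{I} - \V_{L_0}\V_{L_0}^\top)$ inherits the column support of $\S_0$, so $(\Z^\star,\S^\star)$ is feasible and $\S^\star$ is still column-sparse with at most $\gamma n$ nonzero columns. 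The range-disjointness hypothesis $\range{\mL_0}\cap\range{\S_0} = \{\mathbf{0}\}$ is what guarantees that no nonzero column of $\S_0$ gets annihilated by $\mathbf{I}-\V_{L_0}\V_{L_0}^\top$, so the support pattern is exactly preserved, which is crucial when matching subgradients below.

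Next I would write down the KKT conditions for~\eqref{eq:lrr_opt}, introducing a dual multiplier $\mathbf{Y}$ for the equality constraint. Stationarity at $(\Z^\star,\S^\star)$ reduces to the two inclusions $\M^\top\mathbf{Y} \in \partial\norm{\Z^\star}_*$ and $\mathbf{Y} \in \lambda\,\partial\norm{\S^\star}_{2,1}$. The nuclear-norm subgradient decomposes as $\U_{Z^\star}\V_{Z^\star}^\top + \mathbf{W}$ with $\mathbf{W}$ supported off the tangent space $T^\star$ of the rank variety at $\Z^\star$ and spectral norm at most one; the $\ell_{2,1}$-subgradient is characterized columnwise, equalling the normalized columns of $\S^\star$ on its support and having $\ell_2$ norm at most one elsewhere. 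Existence of any such $\mathbf{Y}$ certifies optimality, and strict inequalities on $\norm{\mathbf{W}}$ and on the off-support column norms pin down the tangent structure of the minimizer.

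The technical core is to exhibit $\mathbf{Y}$ explicitly and verify these strict bounds under the $(\mu/(1-\gamma),r)$-coherence hypothesis. I would seek $\mathbf{Y}$ as a sum $\mathbf{Y} = \mathbf{Y}_1 + \mathbf{Y}_2$: $\mathbf{Y}_1$ is built from $\U_{L_0}$ and $\V_{L_0}$ so that $\M^\top\mathbf{Y}_1$ hits $\U_{Z^\star}\V_{Z^\star}^\top$ on $T^\star$, while $\mathbf{Y}_2$ is supported on the $\gamma n$ outlier columns of $\S^\star$, chosen both to match the columnwise subgradient there and to cancel the off-tangent part of $\mathbf{Y}_1$. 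Pushing the off-tangent spectral norm of $\M^\top\mathbf{Y}$ strictly below one and the off-support column $\ell_2$ norms of $\mathbf{Y}$ strictly below $\lambda$ reduces to two coherence-based estimates: uniform control of $\norm{\V_{L_0}^\top\mathbf{e}_j}_2$ by $\sqrt{\mu r/((1-\gamma)n)}$, and an operator-norm bound on the column-sparse submatrix of $\V_{L_0}^\top$ indexed by the outlier columns. The prescribed $\lambda = 3/(7\norm{\M}\sqrt{\gamma^\star l})$ is exactly what balances these two competing inequalities, and the feasibility region for that balance determines the threshold $\gamma^\star(\mu,r)$.

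The main obstacle is the dual-certificate construction itself. Unlike the matrix-completion setting where Gross's golfing scheme produces $\mathbf{Y}$ iteratively, here the feasibility constraint $\M\Z+\S=\M$ forces $\mathbf{Y}$ to lie in the column space of $\M$, which couples the two subgradient inclusions and rules out a decoupled construction. The independence of the underlying subspaces, together with the column-sparsity of $\S_0$, is precisely what makes simultaneous satisfaction of both conditions possible: it decorrelates the $T^\star$-complementary projection of $\mathbf{Y}_1$ from the columnwise constraints that $\mathbf{Y}_2$ must fulfil, so that neither piece destroys the bounds required of the other. Once the certificate is in hand, uniqueness of the recovered row space (as opposed to uniqueness of $\Z^\star$ itself) follows by the standard strict-complementarity argument: any other optimum $\hat\Z$ must share its tangent directions with $\Z^\star$, hence $\range{\hat\Z} = \range{\V_{L_0}\V_{L_0}^\top}$ equals the row space of $\mL_0$.
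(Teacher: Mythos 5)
Your high-level architecture---a feasible oracle candidate, the split subgradient conditions for the nuclear and $\ell_{2,1}$ terms, a two-part dual certificate balanced by the prescribed $\lambda$, and coherence control of the outlier-indexed columns of $\V_{L_0}^\top$---is the same strategy as Liu et al.'s proof (which this paper cites rather than reproves, but reproduces in generalized form in the appendix for Thm.~\ref{thm:sub-lrr}). There is, however, a genuine gap at your very first step that the rest of the plan cannot survive: you fix the explicit candidate $\Z^\star = \V_{L_0}\V_{L_0}^\top$ and set out to certify it as \emph{the} optimum of \eqref{eq:lrr_opt}. The theorem only asserts that the \emph{column space} of the optimum equals the row space of $\mL_0$, not that the optimum is the shape interaction matrix itself; for $\gamma>0$ the pair $(\V_{L_0}\V_{L_0}^\top,\S_0)$ is feasible but generically not optimal (the $\ell_{2,1}$ term shifts the minimizer away from it), and no dual certificate can exist for a non-optimal point. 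The correct move is to introduce the \emph{oracle problem}---the same objective and equality constraint with the constraints $\P_{L_0^\top}\Z=\Z$ and $\badcolproj(\S)=\S$ adjoined---observe that your pair is feasible for it so an optimum $(\Z^*,\S^*)$ exists, and then certify that \emph{unknown} optimum for the unconstrained program. This is not a cosmetic difference: the certificate is assembled from quantities attached to $(\Z^*,\S^*)$, namely $\bar\V \defeq \V_{Z^*}\U_{Z^*}^\top\V_{L_0}$ and the column-normalized $\hat\H=\mc{B}(\S^*)$, and the identity $\V_{L_0}\badcolproj(\bar\V^\top)=\lambda\P_{L_0^\top}\M^\top\hat\H$ that makes your two subgradient inclusions simultaneously satisfiable is precisely the first-order optimality condition of the oracle problem at $(\Z^*,\S^*)$. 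With your explicit candidate that identity fails, and the pieces $\Q_1,\Q_2$ you describe cannot be made to agree on the tangent space and on the outlier columns at once.

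Two smaller issues. First, your closing ``strict complementarity'' step only yields that every optimum satisfies the oracle constraints, i.e.\ that its column space is \emph{contained} in the row space of $\mL_0$ and its sparse part is supported on $\mc{I}_0$; upgrading containment to equality requires the separate rank argument $\rank{\mL_0}=\rank{\C_0}\le\rank{\M\badcolprojc(\Z)}\le\rank{\Z}\le\rank{\mL_0}$ (Lem.~\ref{lem:oracle-constraints}), and propagating the oracle constraints to \emph{all} solutions additionally needs the condition $\badcolproj(\Z^{*+}\Z^*)=\mbf0$, which is itself a nontrivial lemma. Second, the preservation of the column support of your $\S^\star$ follows from the disjointness of the column supports of $\mL_0$ and $\S_0$ (which forces $\S_0\V_{L_0}=\mbf0$ and hence $\S^\star=\S_0$), not from $\range{\mL_0}\cap\range{\S_0}=\{\mbf0\}$; the range condition is needed elsewhere, to guarantee that the corrupted matrix $\M$ is a well-behaved dictionary (the relatively-well-defined property that enters the definition of $\gamma^*$).
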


In other words, LRR can exactly recover the row space of $\mL_0$ even when a constant fraction $\gamma^*$ of the columns has been corrupted by outliers.  As the rank $r$ and coherence $\mu$ shrink, $\gamma^*$ grows allowing greater outlier tolerance.

\begin{figure*}
\begin{center}
\begin{tabular} {@{}c@{}c@{}c@{}}
\ipsfig{.31}{figure=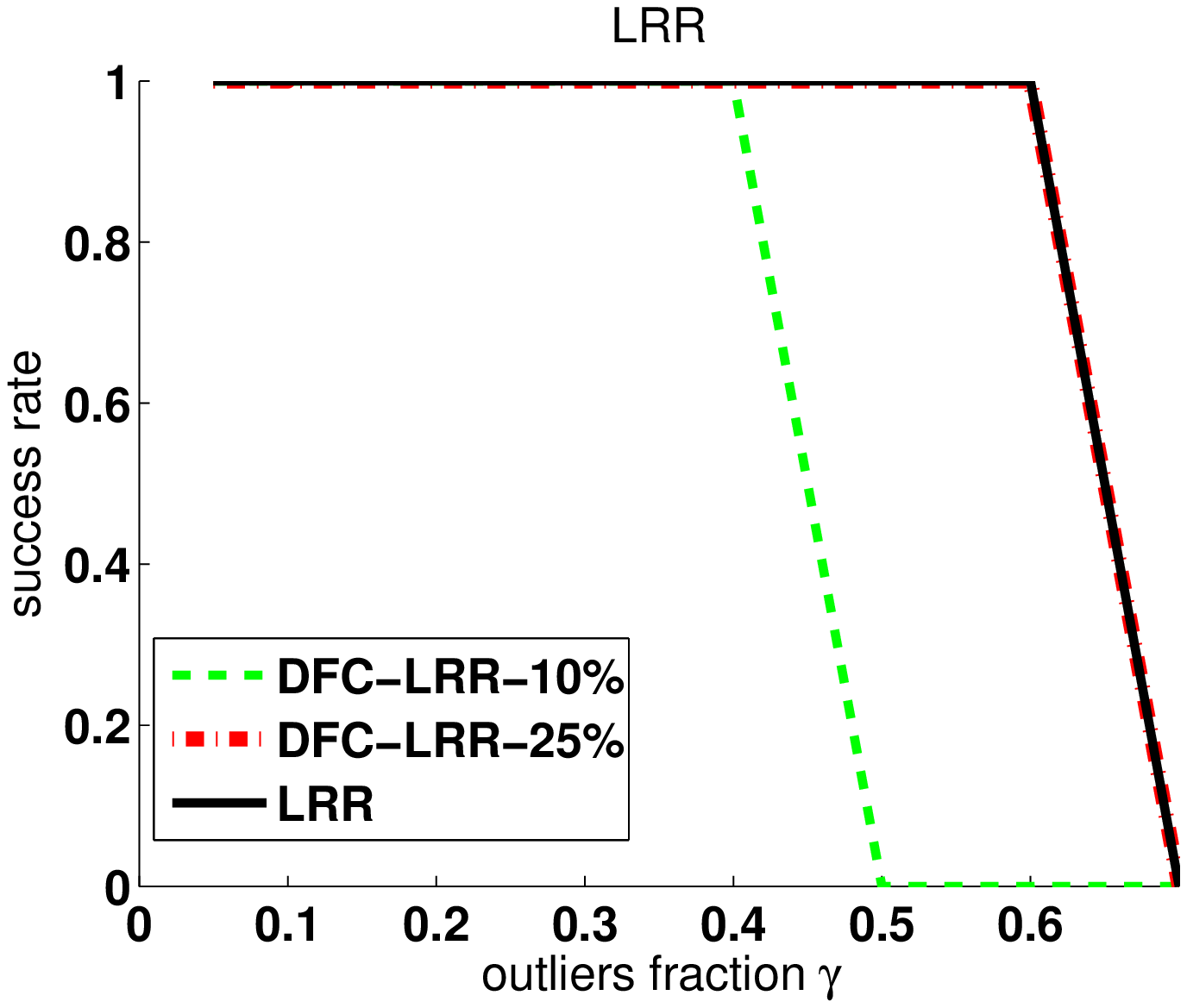} \qquad\qquad &
\ipsfig{.31}{figure=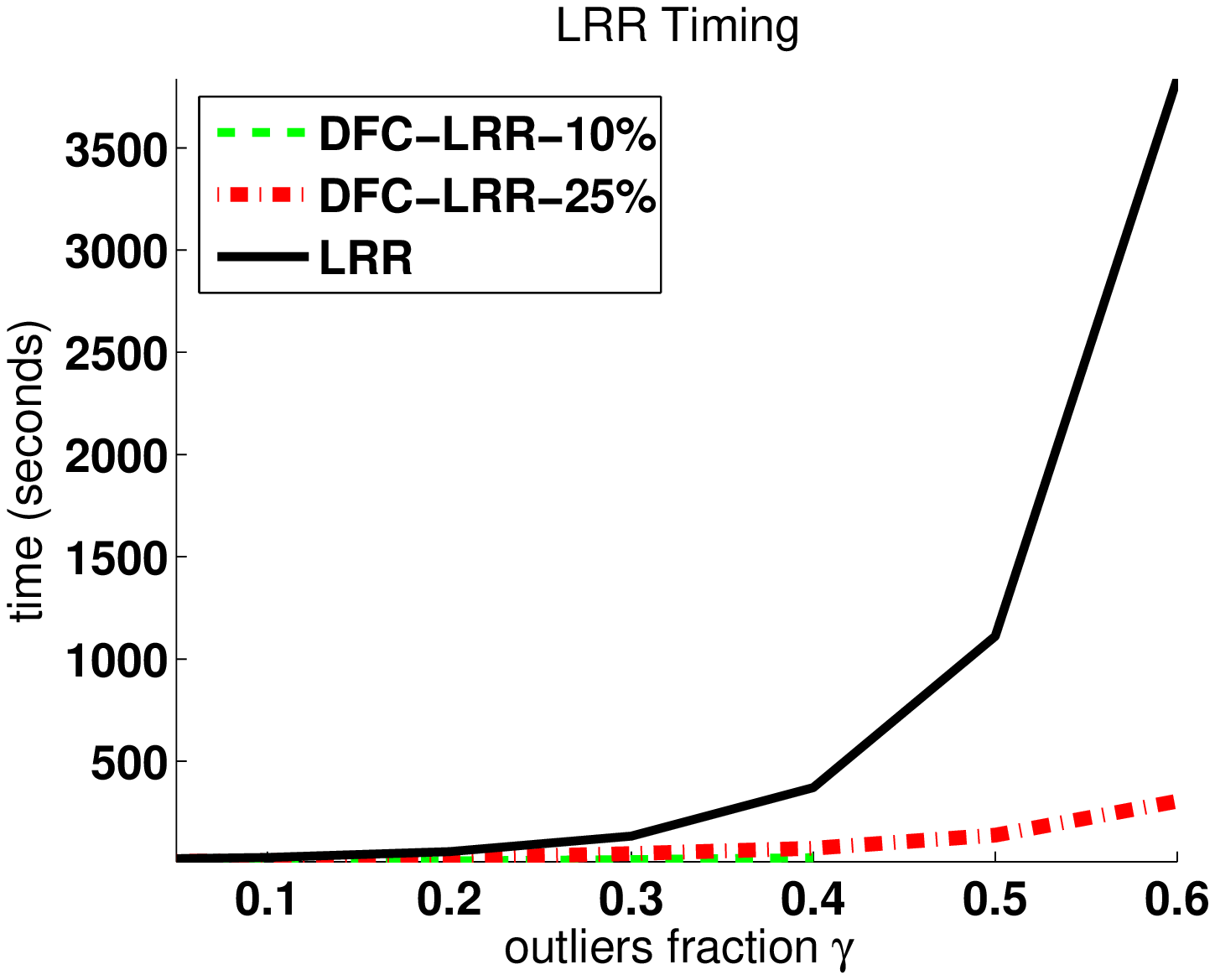} \qquad\qquad &
\ipsfig{.31}{figure=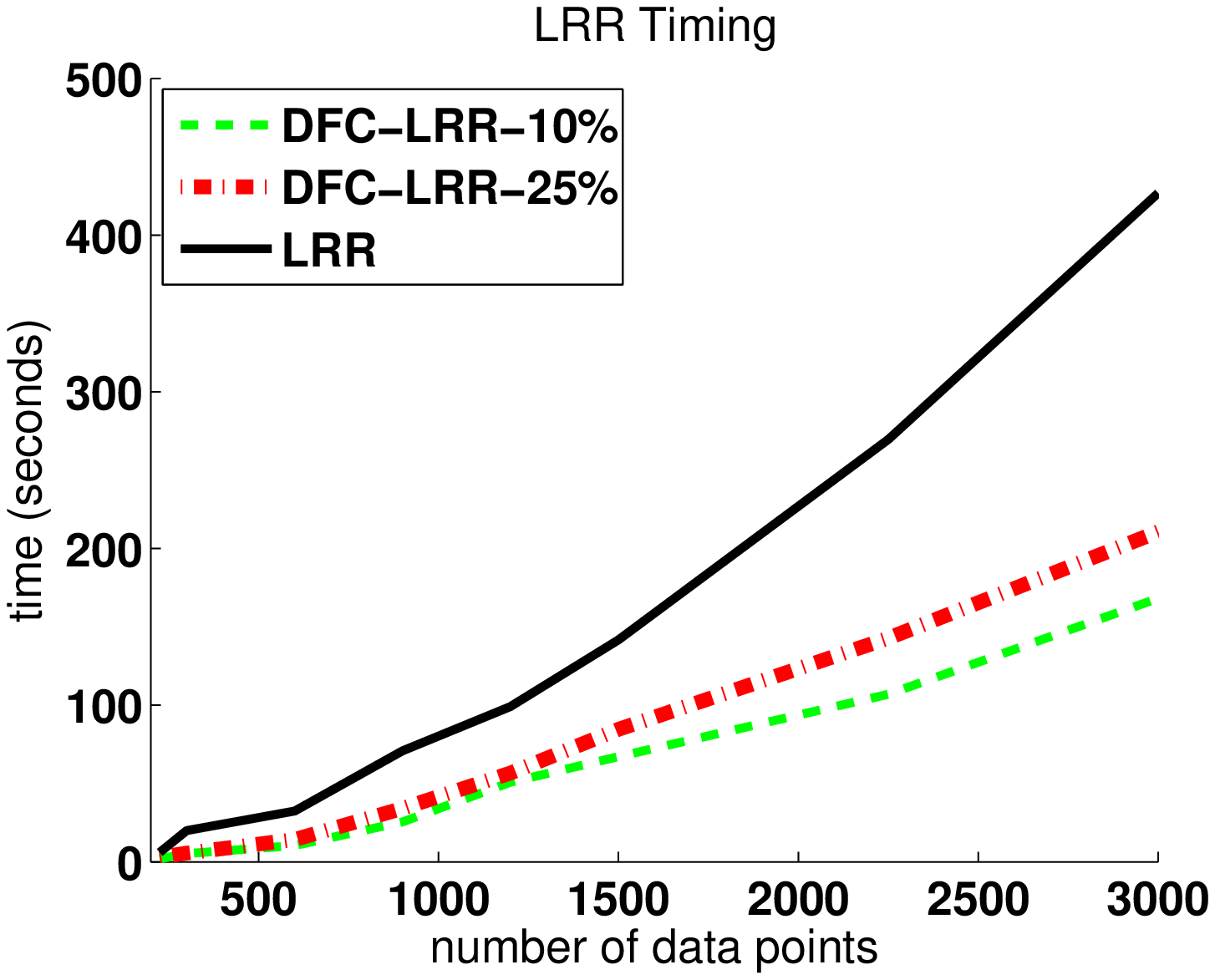} \\
(a) \qquad \qquad& (b)\qquad\qquad & (c) \\
\end{tabular}
\end{center}
\caption{Results on synthetic data (reported results are averages over $10$
trials). (a) Phase transition of LRR and \fastmf-LRR. (b,c) Timing results of LRR
and \fastmf-LRR as functions of $\gamma$ and $n$ respectively.}
\label{fig:synth_low_rank}
\end{figure*}

\subsection{High Probability Subspace Segmentation} \label{sec:lrr-theory}
Our main theoretical result shows that, with high probability and under the same conditions that guarantee the accuracy of LRR, \fastmf-LRR also exactly recovers the
row space of $\mL_0$. 
Recall that in our independent subspace setting accurate row space recovery is tantamount to correct segmentation of the columns of $\mL_0$.
The proof of our result, which generalizes the LRR analysis of~\cite{LiuXuYa11} to a broader class of optimization problems and adapts
the \fastmf analysis of~\cite{MackeyTaJo11}, can be found in the appendix.
\begin{theorem} \label{thm:fast-lrr}
Fix any failure probability $\delta > 0$.  
Under the conditions of Thm,~\ref{thm:lrr}, let $\Zprojmf$ be a solution returned by \fastmf-LRR. 
Then there exists a constant $\gamma^*$ (depending on $\mu$ and $r$) for which
the column space of $\Zprojmf$ exactly equals the row space of $\mL_0$
whenever
$\lambda=3/(7\norm{\M}\sqrt{\gamma^*l})$ for each \fastmf-LRR subproblem, $\gamma \leq
\gamma^*$, and $t=n/l$ for $$ l \geq cr\mu\log(4n/\delta)/(\gamma^*-\gamma)^2$$
and $c$ a fixed constant larger than 1.
\end{theorem}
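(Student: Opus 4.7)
I would prove Theorem~\ref{thm:fast-lrr} in three stages that mirror the three steps of Algorithm~\ref{alg:dfc_lrr}. The first (and most technical) stage generalizes Theorem~\ref{thm:lrr} from the full LRR problem to the subproblem in~\eqref{eq:lrr_opt_dfc}, in which the dictionary $\M$ no longer coincides with the observation $\C_i$. Concretely, I will show that under the original hypotheses on $(\mL_0,\S_0)$, any solution $\hat\Z_i$ to~\eqref{eq:lrr_opt_dfc} has column space exactly equal to the row space of $\mL_0$, provided that the clean portion $\mL_{0,i}$ of $\C_i$ has the same row space as $\mL_0$ and that the outlier fraction within $\C_i$ is at most $\gamma^*$. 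The second stage uses random matrix concentration to show that a uniformly random $l$-column submatrix $\C_i$ satisfies both conditions with high probability once $l \geq c r\mu \log(4n/\delta)/(\gamma^*-\gamma)^2$. The third stage applies a union bound over the $t = n/l$ subproblems and finishes with a short column-projection argument.

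\textbf{Stage 1 (subproblem correctness).} I would follow the dual-certificate construction of~\cite{LiuXuYa11}, but replace their candidate primal pair $(\V_{L_0}\V_{L_0}^\top,\S_0)$ with a new pair $(\Z_i^*,\S_{0,i})$ satisfying $\M\Z_i^* = \mL_{0,i}$ and having column space equal to the row space of $\mL_0$; such a $\Z_i^*$ exists because the clean columns of $\M$ already span that row space. A KKT certificate is then assembled from two dual blocks, one for the nuclear norm and one for the $\ell_{2,1}$-norm, and the scalar inequalities required for exactness and uniqueness reduce, as in~\cite{LiuXuYa11}, to controlling $\lambda\|\M\|\sqrt{\gamma^* l}$; the stated choice $\lambda = 3/(7\|\M\|\sqrt{\gamma^* l})$ is what makes them hold. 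I expect this to be the principal obstacle, because the dual norm bounds in~\cite{LiuXuYa11} rely on the coincidence of dictionary and observation, and adapting each one to the mixed setting requires controlling the outlier fractions in $\M$ and $\C_i$ jointly while retaining the $(\mu,r)$-coherence estimates that feed the certificate.

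\textbf{Stage 2 (sampling preserves the hypotheses).} The outlier count in $\C_i$ is a sum of $l$ bounded $0/1$ random variables with mean at most $\gamma l$, so a Hoeffding/Chernoff bound gives the desired deviation below $\gamma^* l$ with failure probability $\exp(-\Omega((\gamma^*-\gamma)^2 l))$; this accounts for the $(\gamma^*-\gamma)^{-2}$ factor in the sample size. For the row-space condition I would invoke a matrix Chernoff inequality of the type used in~\cite{MackeyTaJo11}, applied to the $(\mu,r)$-coherent matrix $\V_{L_0}^\top$: each of its columns has squared $\ell_2$-norm at most $r\mu/n$, so sampling $l \gtrsim r\mu\log(4n/\delta)$ columns keeps the smallest eigenvalue of the sampled Gram matrix bounded away from zero, guaranteeing that the clean columns in $\C_i$ span the full row space of $\mL_0$. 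Taking $c$ a sufficiently large absolute constant makes both events fail with probability at most $\delta/(2t)$.

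\textbf{Stage 3 (combine).} A union bound over the two concentration events and the $t = n/l$ subproblems bounds the total failure probability by $\delta$. On the good event, every $\hat\Z_i$ has column space equal to the row space of $\mL_0$, so in particular $\range{\hat\Z_j} \subseteq \range{\hat\Z_1}$ for every $j$; projecting the concatenation $[\hat\Z_1,\ldots,\hat\Z_t]$ onto $\range{\hat\Z_1}$ therefore leaves it unchanged, and $\range{\Zprojmf} = \range{\hat\Z_1}$, which equals the row space of $\mL_0$. This completes the proof.
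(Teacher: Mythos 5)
Your plan coincides with the paper's proof in both structure and substance: Stage~1 is precisely the paper's Theorem~\ref{thm:sub-lrr} (a dual-certificate generalization of the analysis of~\cite{LiuXuYa11} to the case where the observation is a column submatrix of the dictionary), and Stage~2 is the paper's combination of Hoeffding's inequality for the hypergeometric distribution with the conservation-of-incoherence lemma of~\cite{MackeyTaJo11}, whose proof is exactly the matrix concentration argument on the sampled columns of $\V_{L_0}^\top$ that you describe (note it must deliver the coherence bound on $\C_{0,i}$, not merely its rank, since that bound feeds the certificate). The only divergence is the combine step, where the paper invokes the probabilistic column-projection guarantee of~\cite{MackeyTaJo11} at a cost of $\delta/4$, whereas you observe that once every $\hat\Z_i$ has column space equal to the row space of $\mL_0$, projecting onto $\range{\hat\Z_1}$ is deterministically exact --- a valid and slightly cleaner finish.
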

Thm.~\ref{thm:fast-lrr} establishes that, like LRR, \fastmf-LRR can tolerate a constant fraction of its data points being corrupted and still recover the correct subspace segmentation of the clean data points with high probability.
When the number of datapoints $n$ is large, solving LRR directly may be prohibitive, but DFC-LRR need only solve a collection of
small, tractable subproblems.  Indeed, Thm.~\ref{thm:fast-lrr} guarantees high probability recovery for DFC-LRR even when the subproblem size $l$ is logarithmic in $n$. 
The corresponding reduction in computational complexity
allows \fastmf-LRR to scale to large problems with little sacrifice in accuracy.

Notably, this column sampling complexity is better than that established by~\cite{MackeyTaJo11} in the matrix
factorization setting: 
 we require O$(r \log n)$ columns
sampled, while~\cite{MackeyTaJo11} requires in the worst case $\Omega(n)$
columns for matrix completion and $\Omega((r \log n)^2)$ for robust matrix
factorization.


\section{Experiments} \label{sec:experiments}
We now explore the empirical performance of \fastmf-LRR on a variety of
simulated and real-world datasets, first for the traditional task of robust subspace segmentation
and next for the more complex task of graph-based semi-supervised learning.  
Our experiments are designed to show the effectiveness of \fastmf-LRR both when
the theory of Section~\ref{sec:theory} holds and when it is violated.  Our
synthetic datasets satisfy the theoretical assumptions of low rank, 
incoherence, and a small fraction of corrupted columns, while our 
real-world datasets violate these criteria. 

For all of our experiments
we use the inexact Augmented Lagrange Multiplier (ALM) algorithm of
\cite{LiuLiYu10} as our base LRR algorithm.  For the subspace segmentation
experiments, we set the regularization parameter to the values suggested in
previous works \cite{LiuXuYa11,LiuLiYu10}, while in our semi-supervised
learning experiments we set it to $1/\sqrt{\max{(m,n)}}$ as suggested in prior
work.\footnote{\url{http://perception.csl.illinois.edu/matrix-rank}} In all
experiments we report parallel running times for \fastmf-LRR, i.e., the time of
the longest running subproblem plus the time required to combine submatrix estimates
via column projection.  All experiments were implemented in Matlab. The
simulation studies were run on an x$86$-$64$ architecture using a single $2.60$
Ghz core and $30$GB of main memory, while the real data experiments were
performed on an x$86$-$64$ architecture equipped with a 2.67GHz 12-core CPU and
64GB of main memory.

\subsection{Subspace Segmentation: LRR vs. DFC-LRR}
\label{ssec:sub_seg}
We first aim to verify that DFC-LRR produces accuracy comparable to LRR in significantly less time, both in synthetic and real-world settings.
We focus on the standard robust subspace segmentation task of identifying the subspace associated with each input datapoint.

\subsubsection{Simulations} \label{ssec:simulations}
To construct our synthetic robust subspace segmentation datasets, we first generate $n_s$ datapoints from each of $k$ independent $r$-dimensional subspaces of $\reals^{m}$,
in a manner similar to~\cite{LiuXuYa11}.
For each subspace $i$, we independently select a basis
$\U_i$ uniformly from all matrices in $\reals^{m \times r}$ with orthonormal columns
and a matrix $\T_i\in \reals^{r \times n_s}$ of independent entries each distributed uniformly in $[0,1]$.  We form the matrix $\X_i \in \reals^{m \times n_s}$ of samples 
from subspace $i$ via $\X_i = \U_i\T_i$  and let
$\X_0 \in \reals^{m \times kn_s} = [\X_1 \,\, \ldots \,\, \X_k]$. For a given
outlier fraction $\gamma$ we next generate an additional $n_o =
\frac{\gamma}{1-\gamma} kn_s$ independent outlier samples, denoted by $\S \in \reals^{m
\times n_o}$.  Each outlier sample has independent $\Gsn(0,\sigma^2)$ entries,
where $\sigma$ is the average absolute value of the entries of the $kn_s$
original samples. We create the input matrix $\M \in \reals^{m \times n
}$, where $n = kn_s + n_o$, as a random permutation of the columns of $[\X_0 \quad \S]$.

\begin{figure}
\centering
  \includegraphics[width=\linewidth]{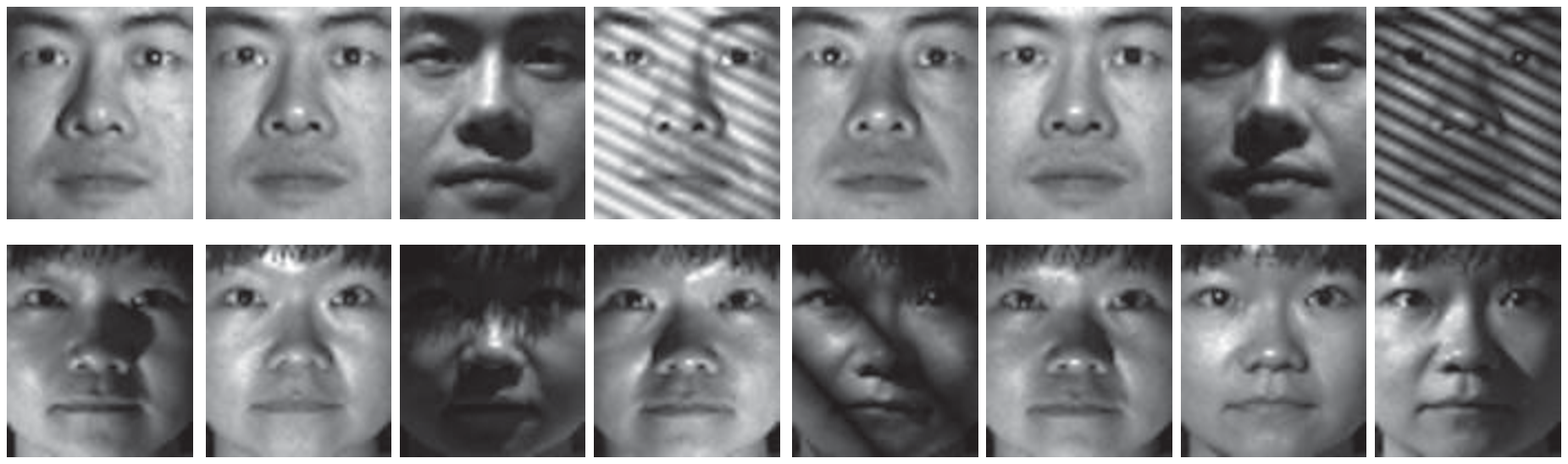}
  \caption{Exemplar face images from Extended Yale Database B. Each row
shows randomly selected images for a human subject.}
  \label{fig:yaleb-example}
\end{figure}

\begin{figure*}
\begin{center}
\begin{tabular} {@{}c@{}c@{}}
\ipsfig{.41}{figure=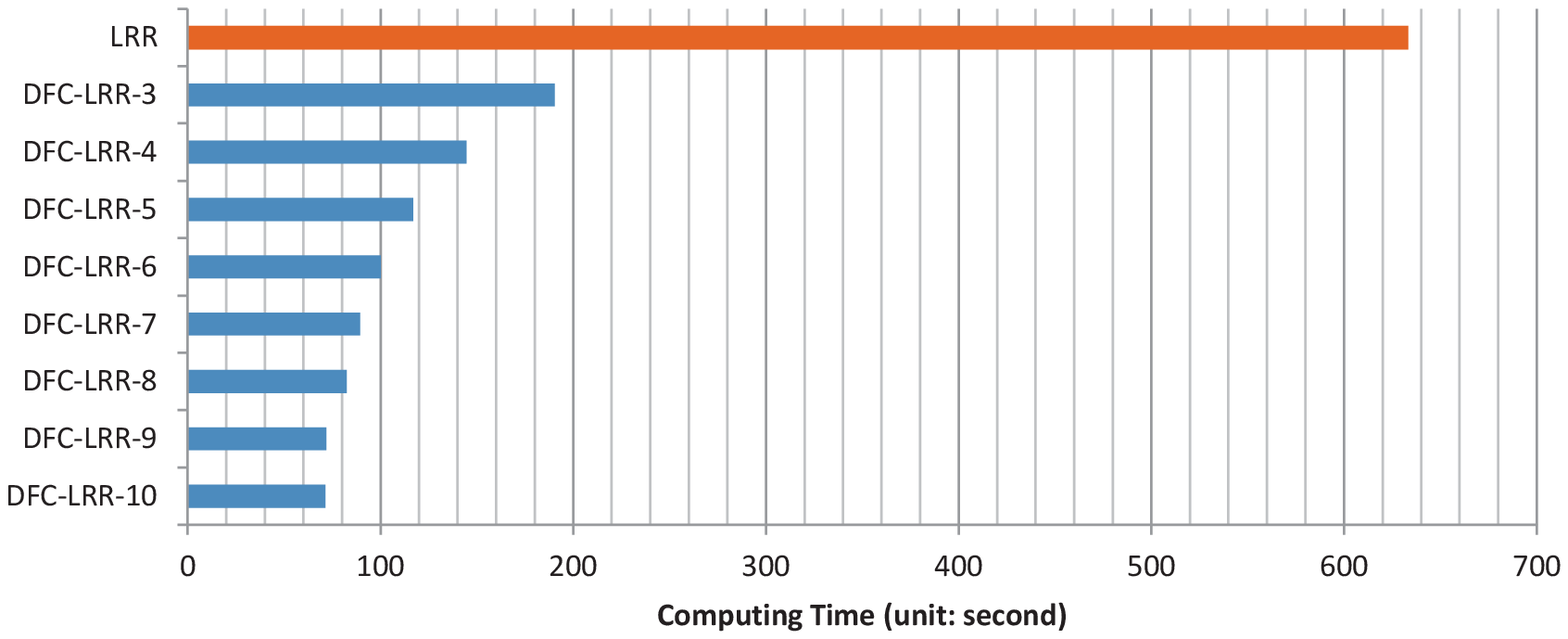}\qquad\qquad &
\ipsfig{.41}{figure=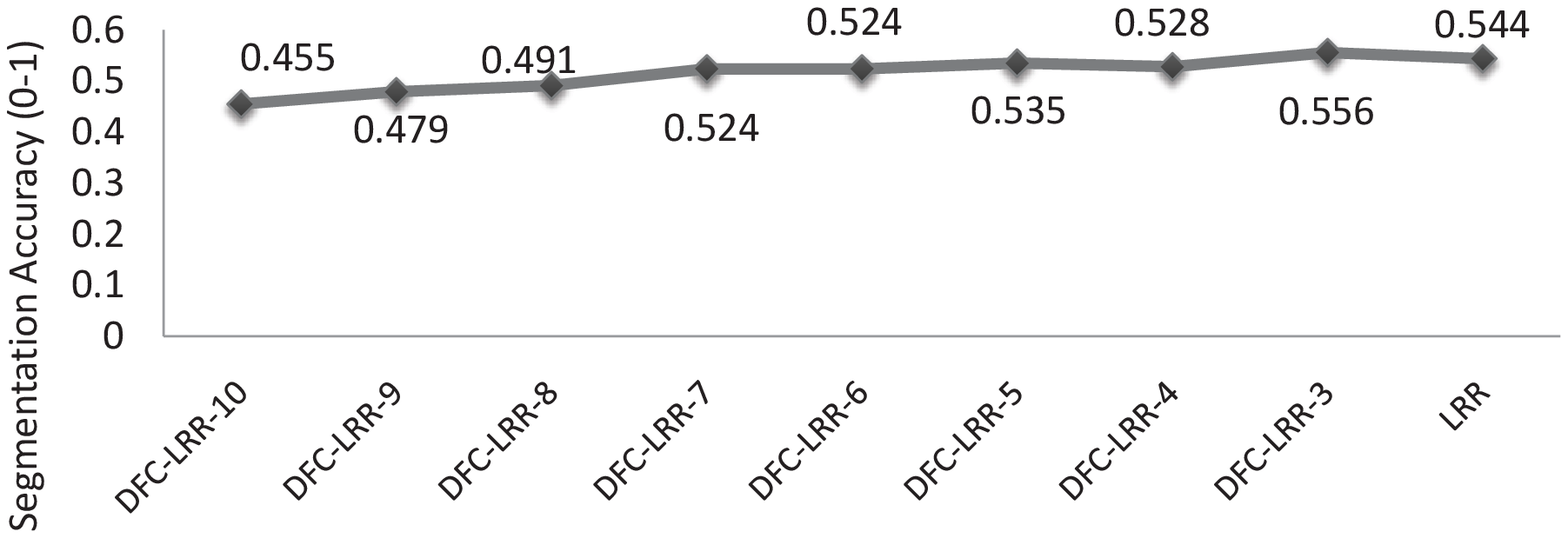} \\
(a) \qquad\qquad& (b) \\
\end{tabular}
\end{center}
\caption{Trade-off between computation and segmentation accuracy on face
recognition experiments. All results are obtained by averaging across 100
independent runs. (a) Run time of LRR and \fastmf-LRR with varying number of
subproblems. (b) Segmentation accuracy for these same experiments. }
\label{fig:yaleb}
\end{figure*}

In our first experiments we fix $k=3$, $m=1500$, $r=5$, and $n_s = 200$, set the
regularizer to $\lambda = 0.2$, and vary the fraction of outliers.  We measure with what frequency
LRR and \fastmf-LRR are able to recover of the row space of $\X_0$ and identify the outlier columns in $\S$,
using the same criterion as defined
in~\cite{LiuXuYa11}.\footnote{Success is determined by whether the oracle
constraints of \eqref{eq:oracle-constraints} in the Appendix are satisfied
within a tolerance of $10^{-4}$.} Figure~\ref{fig:synth_low_rank}(a) 
shows
average performance over $10$ trials. We see that
\fastmf-LRR performs quite well, as the gaps in the phase transitions between
LRR and \fastmf-LRR are small when sampling $10\%$ of the columns (i.e.,
$t=10$) and are virtually non-existent when sampling $25\%$ of the columns
(i.e., $t=4$).

Figure~\ref{fig:synth_low_rank}(b) shows
corresponding timing results for the accuracy results presented in
Figure~\ref{fig:synth_low_rank}(a). These timing results show substantial speedups in
\fastmf-LRR relative to LRR with a modest tradeoff in accuracy as denoted in
Figure~\ref{fig:synth_low_rank}(a). Note that we only report timing results for
values of $\gamma$ for which \fastmf-LRR was successful in all $10$ trials,
i.e., for which the success rate equaled $1.0$ in Figure~\ref{fig:synth_low_rank}(a).
Moreover, Figure~\ref{fig:synth_low_rank}(c) shows timing results using the same
parameter values, except with a fixed fraction of outliers ($\gamma = 0.1$) and
a variable number of samples in each subspace, i.e., $n_s$ ranges from $75$ to
$1000$.  These timing results also show speedups with minimal loss of accuracy,
as in all of these timing experiments, LRR and \fastmf-LRR were successful in
all trials using the same criterion defined in~\cite{LiuXuYa11} and used in our
phase transition experiments of Figure~\ref{fig:synth_low_rank}(a).

\subsubsection{Face Clustering}
\label{sec:scene}

We next demonstrate the comparable quality and increased performance of
\fastmf-LRR relative to LRR on real data, namely,
a subset of Extended
Yale Database B,\footnote{\url{http://vision.ucsd.edu/~leekc/ExtYaleDatabase}}
a standard face benchmarking dataset. Following the experimental
setup in~\cite{LiuLiYu10}, $640$ frontal face images of $10$ human subjects are
chosen, each of which is resized to be $48 \times 42$ pixels and forms a
2016-dimensional feature vector. As noted in previous work~\cite{CandesLiMaWr09}, a
low-dimensional subspace can be effectively used to model face images from one
person, and hence face clustering is a natural application of subspace
segmentation.  Moreover, as illustrated in Figure~\ref{fig:yaleb-example}, a
significant portion of the faces in this dataset are ``corrupted" by shadows,
and hence this collection of images is an ideal benchmark for \emph{robust}
subspace segmentation.

As in~\cite{LiuLiYu10}, we use the feature vector representation of these
images to create a $2016 \times 640$ dictionary matrix, $\M$, and run both LRR
and \fastmf-LRR with the parameter $\lambda$ set to $0.15$. Next, we use the
resulting low-rank coefficient matrix $\hat{\Z}$ to compute an affinity matrix
$\U_{\hat{Z}}\U_{\hat{Z}}^\top$, where $\U_{\hat{Z}}$ contains the top left singular vectors of $\hat{\Z}$.
The affinity matrix is used to cluster the data into $k=10$ clusters
(corresponding to the $10$ human subjects) via spectral embedding (to obtain a
$10$D feature representation) followed by $k$-means.
Following~\cite{LiuLiYu10}, the comparison of different clustering methods
relies on \emph{segmentation accuracy}. Each of the $10$ clusters is assigned a
label based on majority vote of the ground truth labels of the points assigned
to the cluster.
We evaluate clustering performance of both LRR and \fastmf-LRR by computing
segmentation accuracy as in~\cite{LiuLiYu10}, i.e., each cluster
is assigned a label based on majority vote of the ground truth labels of the
points assigned to the cluster.  The segmentation accuracy is then computed by
averaging the percentage of correctly classified data over all classes.

Figures~\ref{fig:yaleb}(a) and~\ref{fig:yaleb}(b) show the computation time and
the segmentation accuracy, respectively, for LRR and for
\fastmf-LRR with varying numbers of subproblems (i.e., values of $t$).  On this
relatively-small data set ($n=640$ faces), LRR requires over $10$ minutes to
converge.
\fastmf-LRR demonstrates a roughly linear computational speedup as a
function of $t$, comparable accuracies to LRR for smaller values of $t$ and a
quite gradual decrease in accuracy for larger $t$.

\subsection{Graph-based Semi-Supervised Learning}
\label{ssec:semi_super}

Graph representations, in which samples are vertices and weighted edges express affinity
relationships between samples, are crucial in various computer vision tasks.
Classical graph construction methods separately calculate the outgoing edges
for each sample.  This local strategy makes the graph vulnerable to
contaminated data or outliers.
Recent work in computer vision has illustrated the utility of global graph
construction strategies using graph Laplacian~\cite{gao10}
or matrix low-rank~\cite{zhuang12} based regularizers. L1 regularization has also
been effectively used to encourage sparse graph construction~\cite{cheng10,he11}.
Building upon the success of global construction methods and noting the
connection between subspace segmentation and graph construction as described in
Section~\ref{ssec:subspace_segmentation}, we present a novel application of the
low-rank representation methodology, relying on our \fastmf-LRR algorithm to
scalably yield a \emph{sparse, low-rank graph} (SLR-graph).  We
present a variety of results on large-scale semi-supervised learning visual
classification tasks and provide a detailed comparison with leading baseline
algorithms.

\subsubsection{Benchmarking Data}

We adopt the following three large-scale benchmarks:

\vspace{0.05in}
\noindent \textbf{Columbia Consumer Video
(CCV) Content Detection}\footnote{\url{http://www.ee.columbia.edu/ln/dvmm/CCV/}}: Compiled to
stimulate research on recognizing highly-diverse visual content in
unconstrained videos, this dataset consists of $9317$ YouTube videos over
$20$ semantic categories (e.g., baseball, beach, music performance). Three
popular audio/visual features (5000-D SIFT, 5000-D STIP, and 4000-D MFCC) are extracted.

\vspace{0.03in}
\noindent \textbf{MED12 Multimedia Event Detection}:
The MED12 video corpus consists of $\mathtt{\sim}150$K multimedia videos, with
an average duration of $2$ minutes, and is used for detecting 20 specific
semantic events. For each event, $130$ to $367$ videos are provided as positive
examples, and the remainder of the videos are ``null" videos that do not
correspond to any event. In this work, we keep all positive examples and
sample $10$K null videos, resulting in a dataset of $13,876$ videos. We extract
six features from each video, first at sampled frames and then accumulated to
obtain video-level representations. The features are either visual (1000-D
sparse-SIFT, 1000-D dense-SIFT, 1500-D color-SIFT, 5000-D STIP), audio (2000-D
MFCC), or semantic features (2659-D CLASSEME~\cite{torr10}).

\vspace{0.05in}
\noindent \textbf{NUS-WIDE-Lite Image Tagging}: NUS-WIDE is among the largest available image
tagging benchmarks, consisting of over $269$K crawled images from Flickr that
are associated with over $5$K user-provided tags. Ground-truth images are
manually provided for $81$ selected concept tags. We generate a lite version by
sampling $20$K images. For each image, 128-D wavelet texture, 225-D block-wise
LAB-based color moments and 500-D bag of visual words are extracted, normalized
and finally concatenated to form a single feature representation for the image.

\begin{figure*}[ht!]
\begin{center}
\begin{tabular} {@{}c@{}c@{}}
\ipsfig{.42}{figure=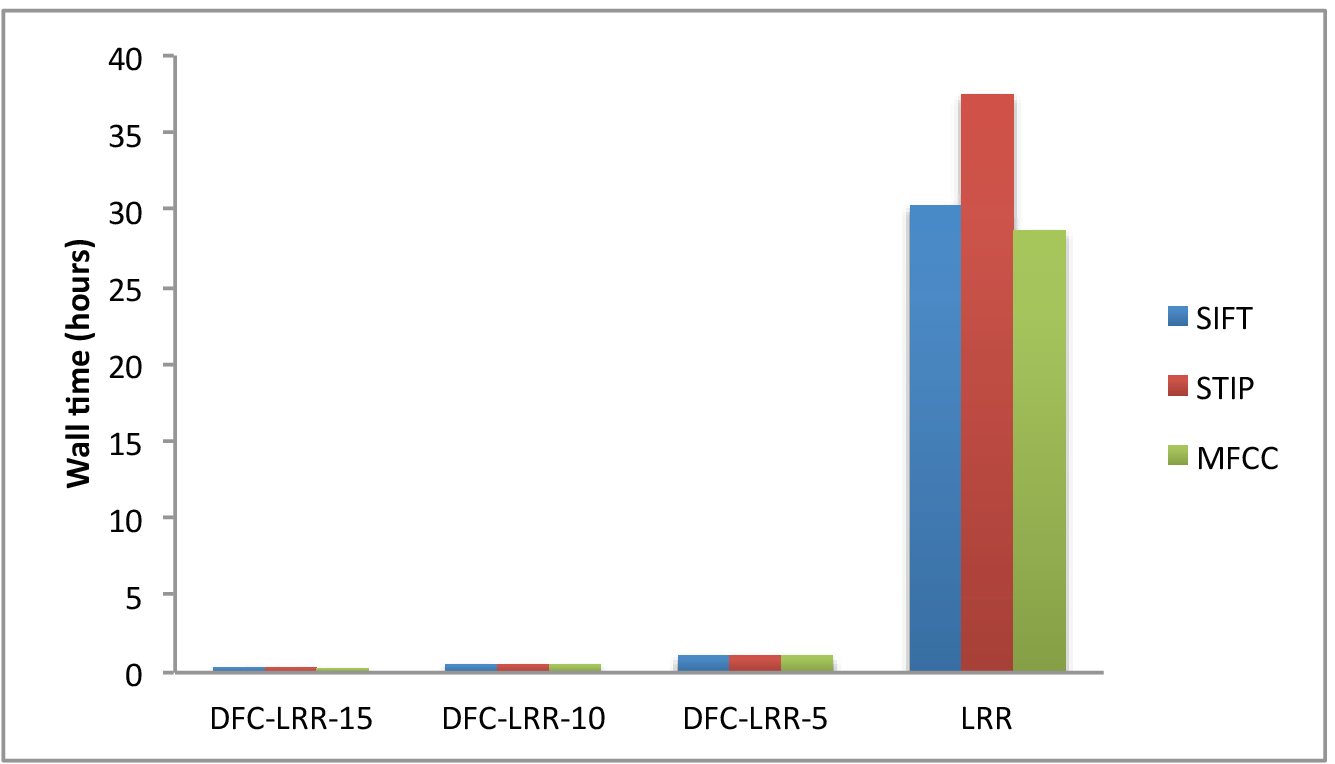}\qquad\qquad\qquad\qquad &
\ipsfig{.42}{figure=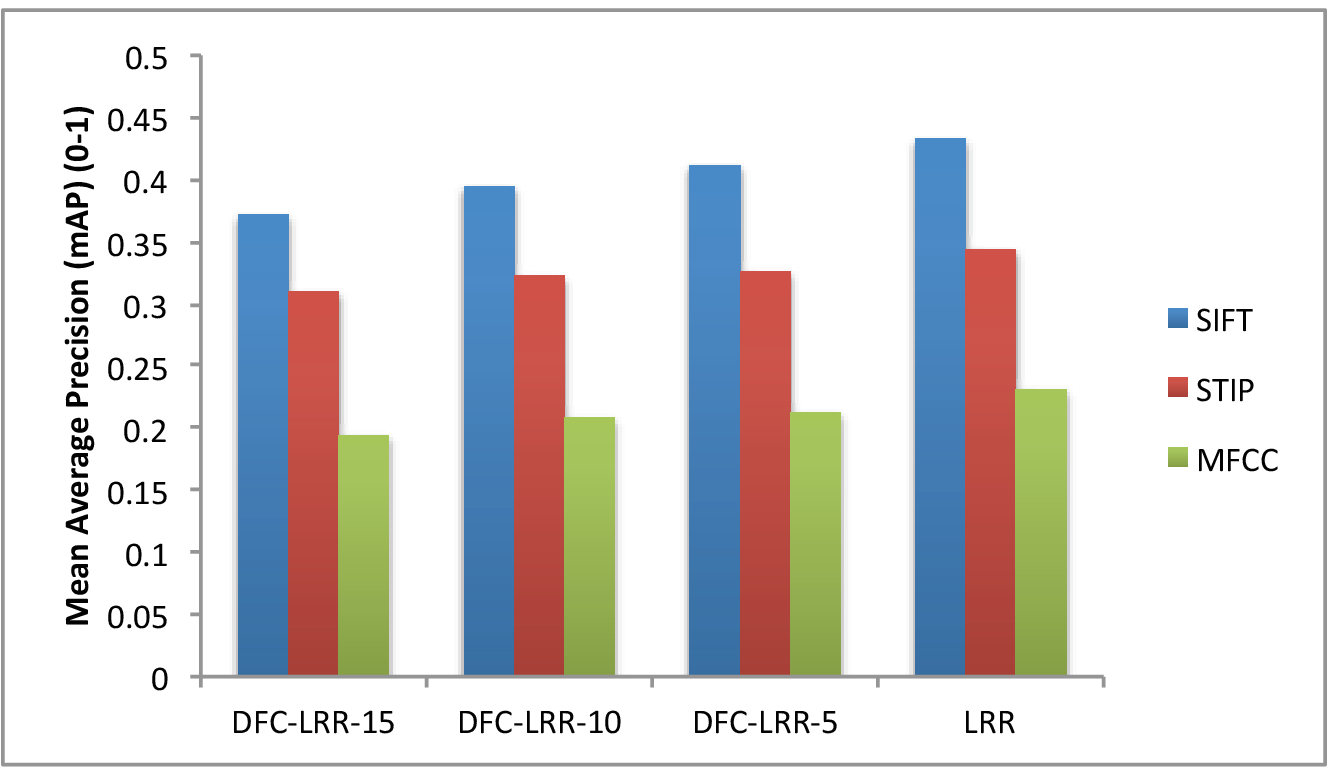} \\
(a) \qquad\qquad\qquad\qquad & (b) \\
\end{tabular}
\end{center}
\caption{ Trade-off between computation and accuracy for the SLR-graph on the CCV
dataset. (a) Wall time of LRR and
\fastmf-LRR with varying numbers of subproblems. (b) mAP scores for these same
experiments.  }
\label{fig:ccv}
\end{figure*}

\subsubsection{Graph Construction Algorithms}
\label{ssec:graph_const}
The three graph construction schemes we evaluate are described below. Note that
we exclude other baselines (e.g., NNLRS~\cite{zhuang12}, LLE
graph~\cite{wangj09}, L1-graph~\cite{cheng10}) due to either scalability
concerns
or because prior work has already demonstrated inferior performance relative to
the SPG algorithm defined below~\cite{zhuang12}.

\vspace{0.03in}
\noindent \textbf{$k$NN-graph}: We construct a nearest neighbor graph by
connecting (via undirected edges) each vertex to its $k$ nearest
neighbors in terms of $l_2$ distance in the specified feature space.
Exponential weights are associated with edges, i.e.,
$w_{ij} = \exp{- d_{ij}^2 / \sigma^2}$, where
$d_{ij}$ is the distance between $x_i$ and $x_j$ and $\sigma$ is an empirically-tuned
parameter~\cite{wang06}.

\vspace{0.03in}
\noindent \textbf{SPG}: Cheng et al.~\cite{cheng10} proposed a noise-resistant
L1-graph which encourages sparse vertex connectedness, motivated by the work of
sparse representation~\cite{wright09}. Subsequent work, entitled \emph{sparse
probability graph} (SPG)~\cite{he11} enforced positive graph weights.
Following the approach of~\cite{zhuang12}, we implemented a variant of SPG by solving
the following optimization problem for each sample:
\begin{eqnarray}
\min_{\mathbf{w}_x} \ \  \| \mathbf{x} - \mathbf{D}_x \mathbf{w}_x\|_2^2 + \alpha \|\mathbf{w}_x\|_1, \ \ s.t. \ \ \mathbf{w}_x \ge 0,
\label{eqn:spg}
\end{eqnarray}
where $\mathbf{x}$ is a feature representation of a sample and $\mathbf{D}_x$
is the basis matrix for $\x$ constructed from its $n_k$ nearest neighbors.
We use an open-source tool\footnote{\url{http://sparselab.stanford.edu}} to
solve this non-negative Lasso problem.

\vspace{0.03in}
\noindent \textbf{SLR-graph}:
Our novel graph construction method contains two-steps: first LRR or
\fastmf-LRR is performed on the entire data set to recover the intrinsic
low-rank clustering structure.  We then treat the resulting low-rank coefficient
matrix $\Z$ as an affinity matrix, and
for sample $x_i$, the $n_k$ samples with largest affinities to $x_i$ are
selected to form a basis matrix and used to solve the SPG optimization
described by Problem~(\ref{eqn:spg}). The resulting non-negative coefficients
(typically sparse owing to the $\ell_1$ regularization term on $\mathbf{w}_x$
in ~(\ref{eqn:spg})) are used to define the graph.

\begin{table}[t]
\caption{Mean average precision (mAP) (0-1) scores for various graph
construction methods. \fastmf-LRR-10
is performed for SLR-Graph.
The best mAP score for each feature is highlighted in bold.
\ignore{\notate{I believe we're reporting results with \fastmf-LRR-10; results with $t=5$
would look much better and would still give us an order of magnitude speedup.  Regardless
of what we decide, we need to clearly specify what value of $t$ we're using in these
reported results}
}
}
\label{table:graph}

\begin{small}
\begin{sc}

\begin{subtable}{\linewidth}
\caption{CCV}
\centering
\begin{tabular}{lccc}
\hline
 & $k$NN-Graph & SPG & SLR-Graph\\
\hline
SIFT & .2631 & .3863 & \textbf{.3946}\\
STIP   & .2011 & .3036 & \textbf{.3227}\\
MFCC   & .1420 & \textbf{.2129} & .2085\\
\hline
\end{tabular}
\end{subtable}

\vspace{0.1cm}

\begin{subtable}{\linewidth}
\caption{MED12}
\centering
\begin{tabular}{lccc}
\hline
 & $k$NN-Graph & SPG & SLR-Graph\\
\hline
Color-SIFT & .0742 & .1202 & \textbf{.1432}\\
Dense-SIFT   & .0928 & .1350 & \textbf{.1525}\\
Sparse-SIFT   & .0780 & .1258 & \textbf{.1464}\\
MFCC   & .0962 & \textbf{.1371} & \textbf{.1371}\\
CLASSEME   & .1302 & .1872 & \textbf{.2120}\\
STIP & .0620 & \textbf{.0835} & .0803 \\
\hline
\end{tabular}
\end{subtable}

\vspace{0.1cm}

\begin{subtable}{\linewidth}
\caption{NUS-WIDE-Lite}
\centering
\begin{tabular}{ccc}
\hline
$k$NN-Graph & SPG & SLR-Graph\\
\hline
 .1080 & .1003 & \textbf{.1179}\\
\hline
\end{tabular}
\end{subtable}

\end{sc}
\end{small}
\end{table}

\subsubsection{Experimental Design}

For each benchmarking dataset, we first construct graphs by treating sample
images/videos as vertices and using the three algorithms outlined in
Section~\ref{ssec:graph_const} to create (sparse) weighted edges between
vertices. For fair comparison, we use the same parameter settings, namely
$\alpha = 0.05$ and $n_k = 500$ for both SPG and SLR-graph.  Moreover, we set
$k=40$ for $k$NN-graph after tuning over the range $k=10$ through $k=60$.

We then use a given graph structure to perform semi-supervised label
propagation using an efficient label propagation algorithm~\cite{wang06} that
enjoys a closed-form solution and often achieves the state-of-the-art
performance. We perform a separate label propagation for each category in our
benchmark, i.e., we run a series of $20$ binary classification label propagation
experiments for CCV/MED12 and $81$ experiments for NUS-WIDE-Lite.  For each
category, we randomly select half of the samples as training points (and use
their ground truth labels for label propagation) and use the remaining half 
as a test set. We repeat this process $20$ times for
each category with different random splits.  Finally, we compute Mean Average
Precision (mAP) based on the results on the test sets across all runs of label
propagation.

\subsubsection{Experimental Results}

We first performed experiments using the CCV benchmark, the smallest of our
datasets, to explore the tradeoff between computation and accuracy when
using \fastmf-LRR as part of our proposed SLR-graph.  Figure~\ref{fig:ccv}(a)
presents the time required to run SLR-graph with LRR versus \fastmf-LRR with
three different numbers of subproblems ($t = 5, 10, 15$), while
Figure~\ref{fig:ccv}(b) presents the corresponding accuracy results. The
figures show that \fastmf-LRR performs comparably to LRR for smaller values of
$t$, and performance gradually degrades for larger $t$.  Moreover, \fastmf-LRR
is up to two orders of magnitude faster and achieves superlinear speedups relative to
LRR.\footnote{We restricted the maximum number of internal LRR iterations to $500$ to
ensure that LRR ran to completion in less than two days.} Given
the scalability issues of LRR on this modest-sized dataset, along with the
comparable accuracy of \fastmf-LRR, we ran SLR-graph exclusively with
\fastmf-LRR ($t=10$) for our two larger datasets. 

Table~\ref{table:graph} summarizes the results of our semi-supervised learning
experiments using the three graph construction techniques defined in
Section~\ref{ssec:graph_const}.  The results show that our proposed SLR-graph
approach leads to significant performance gains in terms of mAP across all
benchmarking datasets for the vast majority of features. These results
demonstrate the benefit of enforcing both low-rankedness and sparsity during
graph construction.  Moreover, conventional low-rank oriented algorithms, e.g.,
\cite{zhuang12, LinGaWrWuChMa09} would be computationally infeasible on our
benchmarking datasets, thus highlighting the utility of employing \fastmf's
divide-and-conquer approach to generate a scalable algorithm.

\section{Conclusion}
\label{sec:conclusion}

Our primary goal in this work was to introduce a provably accurate algorithm suitable for large-scale
low-rank subspace segmentation.
While some contemporaneous work~\cite{AdlerElHe13} also aims at scalable subspace segmentation, 
this method offers no guarantee of correctness.
In contrast, \fastmf-LRR provably preserves the theoretical recovery
guarantees of the LRR program.
Moreover, our divide-and-conquer approach achieves empirical accuracy
comparable to state-of-the-art methods
while obtaining linear to superlinear computational gains, both on standard subspace
segmentation tasks and on novel applications to semi-supervised learning. 
\fastmf-LRR also lays the groundwork for scaling up 
LRR derivatives known to offer improved performance, e.g., LatLRR 
in the setting of standard subspace segmentation and NNLRS in the graph-based 
semi-supervised learning setting.
The same techniques may prove useful in developing scalable approximations
to other convex formulations for subspace segmentation, e.g., \cite{LiuYa12}.  

{\small
\bibliographystyle{ieee}
\bibliography{refs}

\begin{thebibliography}{10}\itemsep=-1pt

\bibitem{AdlerElHe13}
A.~Adler, M.~Elad, and Y.~Hel-Or.
\newblock Probabilistic subspace clustering via sparse representations.
\newblock {\em IEEE Signal Process. Lett.}, 20(1):63--66, 2013.

\bibitem{BasriJa03}
R.~Basri and D.~Jacobs.
\newblock Robust face recognition via sparse representation.
\newblock {\em IEEE Trans. Pattern Anal. Mach. Intell.}, 25(3):218--233, 2003.

\bibitem{CandesLiMaWr09}
E.~J. Cand{\`e}s, X.~Li, Y.~Ma, and J.~Wright.
\newblock Robust principal component analysis?
\newblock {\em Journal of the ACM}, 58(3):1--37, 2011.

\bibitem{chengb11}
B.~Cheng, G.~Liu, J.~Wang, Z.~Huang, and S.~Yan.
\newblock Multi-task low-rank affinity pursuit for image segmentation.
\newblock In {\em ICCV}, 2011.

\bibitem{cheng10}
B.~Cheng, J.~Yang, S.~Yan, Y.~Fu, and T.~S. Huang.
\newblock Learning with l1-graph for image analysis.
\newblock {\em IEEE Transactions on Image Processing}, 19(4):858--866, 2010.

\bibitem{CosteiraKa98}
J.~Costeira and T.~Kanade.
\newblock A multibody factorization method for independently moving objects.
\newblock {\em International Journal of Computer Vision}, 29(3), 1998.

\bibitem{ssc}
E.~Elhamifar and R.~Vidal.
\newblock Sparse subspace clustering.
\newblock In {\em CVPR}, 2009.

\bibitem{NiuReReWr11}
B.~R. Feng~Niu, C.~R\'e, and S.~J. Wright.
\newblock Hogwild!: A lock-free approach to parallelizing stochastic gradient
  descent.
\newblock In {\em NIPS}, 2011.

\bibitem{gao10}
S.~Gao, I.~W.~H. Tsang, L.~T. Chia, and P.~Zhao.
\newblock Local features are not lonely - laplacian sparse coding for image
  classification.
\newblock In {\em CVPR}, 2010.

\bibitem{Gear1998}
C.~W. Gear.
\newblock Multibody grouping from motion images.
\newblock {\em Int. J. Comput. Vision}, 29:133--150, August 1998.

\bibitem{GemullaNiHaSi11}
R.~Gemulla, E.~Nijkamp, P.~J. Haas, and Y.~Sismanis.
\newblock Large-scale matrix factorization with distributed stochastic gradient
  descent.
\newblock In {\em KDD}, 2011.

\bibitem{HastieSi98}
T.~Hastie and P.~Simard.
\newblock Metrics and models for handwritten character recognition.
\newblock {\em Statistical Science}, 13(1):54--65, 1998.

\bibitem{he11}
R.~He, W.-S. Zheng, B.-G. Hu, and X.-W. Kong.
\newblock Nonnegative sparse coding for discriminative semi-supervised
  learning.
\newblock In {\em CVPR}, 2011.

\bibitem{Hoeffding63}
W.~Hoeffding.
\newblock Probability inequalities for sums of bounded random variables.
\newblock {\em Journal of the American Statistical Association},
  58(301):13--30, 1963.

\bibitem{KumarMoTa09}
S.~Kumar, M.~Mohri, and A.~Talwalkar.
\newblock On sampling-based approximate spectral decomposition.
\newblock In {\em ICML}, 2009.

\bibitem{LinGaWrWuChMa09}
Z.~Lin, A.~Ganesh, J.~Wright, L.~Wu, M.~Chen, and Y.~Ma.
\newblock Fast convex optimization algorithms for exact recovery of a corrupted
  low-rank matrix.
\newblock UIUC Technical Report UILU-ENG-09-2214, 2009.

\bibitem{LiuLiYu10}
G.~Liu, Z.~Lin, and Y.~Yu.
\newblock Robust subspace segmentation by low-rank representation.
\newblock In {\em ICML}, 2010.

\bibitem{LiuXuYa11}
G.~Liu, H.~Xu, and S.~Yan.
\newblock Exact subspace segmentation and outlier detection by low-rank
  representation.
\newblock \url{arXiv:1109.1646v2 [cs.IT]}, 2011.

\bibitem{LiuYa11}
G.~Liu and S.~Yan.
\newblock Latent low-rank representation for subspace segmentation and feature
  extraction.
\newblock In {\em ICCV}, 2011.

\bibitem{LiuYa12}
G.~Liu and S.~Yan.
\newblock Active subspace: Toward scalable low-rank learning.
\newblock {\em Neural Computation}, 24:3371--3394, 2012.

\bibitem{MackeyTaJo11}
L.~Mackey, A.~Talwalkar, and M.~I. Jordan.
\newblock Divide-and-conquer matrix factorization.
\newblock In {\em NIPS}, 2011.

\bibitem{Recht09}
B.~Recht.
\newblock A simpler approach to matrix completion.
\newblock \url{arXiv:0910.0651v2 [cs.IT]}, 2009.

\bibitem{RechtRe11}
B.~Recht and C.~R\'e.
\newblock Parallel stochastic gradient algorithms for large-scale matrix
  completion.
\newblock In {\em Optimization Online}, 2011.

\bibitem{TomasiKa92}
C.~Tomasi and T.~Kanade.
\newblock Shape and motion from image streams under orthography.
\newblock {\em International Journal of Computer Vision}, 9(2):137--154, 1992.

\bibitem{torr10}
L.~Torresani, M.~Szummer, and A.~W. Fitzgibbon.
\newblock Efficient object category recognition using classemes.
\newblock In {\em ECCV}, 2010.

\bibitem{VidalMaSa05}
R.~Vidal, Y.~Ma, and S.~Sastry.
\newblock Generalized principal component analysis (gpca).
\newblock {\em IEEE Trans. Pattern Anal. Mach. Intell.}, 27(12):1--15, 2005.

\bibitem{wang06}
F.~Wang and C.~Zhang.
\newblock Label propagation through linear neighborhoods.
\newblock In {\em ICML}, 2006.

\bibitem{wangj09}
J.~Wang, F.~Wang, C.~Zhang, H.~C. Shen, and L.~Quan.
\newblock Linear neighborhood propagation and its applications.
\newblock {\em IEEE Trans. Pattern Anal. Mach. Intell.}, 31(9):1600--1615,
  2009.

\bibitem{wright09}
J.~Wright, A.~Y. Yang, A.~Ganesh, S.~S. Sastry, and Y.~Ma.
\newblock Robust face recognition via sparse representation.
\newblock {\em IEEE Trans. Pattern Anal. Mach. Intell.}, 31(2):210--227, 2009.

\bibitem{YangWrMaSa08}
A.~Yang, J.~Wright, Y.~Ma, and S.~Sastry.
\newblock Unsupervised segmentation of natural images via lossy data
  compression.
\newblock {\em Computer Vision and Image Understanding}, 110(2):212--225, 2008.

\bibitem{YuHsSiDh12}
H.-F. Yu, C.-J. Hsieh, S.~Si, and I.~Dhillon.
\newblock Scalable coordinate descent approaches to parallel matrix
  factorization for recommender systems.
\newblock In {\em ICDM}, 2012.

\bibitem{zhuang12}
L.~Zhuang, H.~Gao, Z.~Lin, Y.~Ma, X.~Zhang, and N.~Yu.
\newblock Non-negative low rank and sparse graph for semi-supervised learning.
\newblock In {\em CVPR}, 2012.

\end{thebibliography}
}
\clearpage
\newpage
\appendix

\section{Proof of Theorem~\ref{thm:fast-lrr}}

Our proof of Thm.~\ref{thm:fast-lrr} rests upon three key results:
a new deterministic recovery guarantee for LRR-type problems that generalizes the guarantee 
of \cite{LiuXuYa11},
a probabilistic estimation guarantee for column projection established
in \cite{MackeyTaJo11},
and a probabilistic guarantee of \cite{MackeyTaJo11} showing that a uniformly chosen submatrix
of a $(\mu,r)$-coherent matrix is nearly $(\mu,r)$-coherent.
These results are presented in Secs.~\ref{sec:lrr-analysis}, \ref{sec:rp-analysis}, and
\ref{sec:subcoh} respectively.
The proof of Thm.~\ref{thm:fast-lrr} follows in Sec.~\ref{sec:fast-lrr-proof}.

In what follows, the unadorned norm $\norm{\cdot}$ represents the spectral norm of a matrix.
We will also make use of a
technical condition, introduced by Liu et al.~\cite{LiuXuYa11} to ensure that a
corrupted data matrix is well-behaved when used as a dictionary:
\begin{definition}[Relatively Well-Definedness]
A matrix $\M = \mL_0 + \S_0$ is $\beta$-RWD if
$$\norm{\mSigma_{M}^{-1}\V_{M}^T\V_{L_0}} \leq \frac{1}{\beta \norm{\M}}.$$
\end{definition}
\noindent A larger value of $\beta$ corresponds to improved recovery properties.

\subsection{Analysis of Low-Rank Representation} \label{sec:lrr-analysis}
\newcommand{\mO}{\mathbf{O}}
Thm.~1 of \cite{LiuXuYa11} analyzes LRR recovery under the constraint $\mO = \D\Z + \S$
when the observation matrix $\mO$ and the dictionary $\D$ are both equal to the input matrix $\M$.
Our next theorem provides a comparable analysis when the observation matrix is a column submatrix 
of the dictionary.

\begin{theorem} \label{thm:sub-lrr}
Suppose that $\M = \mL_0 + \S_0 \in \reals^{m\times n}$ is $\beta$-RWD with rank $r$ 
and that $\mL_0$ and $\S_0$ have independent column support with $\range{\mL_0} \cap \range{\S_0} = \{\mbf0\}$.
Let $\S_{0,C}\in\reals^{m\times l}$ be a column submatrix of $\S_0$ supported on $\gamma l$ columns,
and suppose that $\C$, the corresponding column submatrix of $\M$, is $(\frac{\mu}{1-\gamma},r)$-coherent.
Define
$$\gamma^* \defeq \frac{324 \beta^2}{324\beta^2 + 49(11+4\beta)^2\mu r},$$
and let $(\hat\Z,\hat\S)$ be a solution to the problem
\begin{align} \label{eq:sub-lrr}
&\min_{\Z,\S} \quad \norm{\Z}_*+\lambda\norm{\S}_{2,1}\quad
\text{subject\, to}\quad \C = \M\Z + \S
\end{align}
with $\lambda=3/(7\norm{\M}\sqrt{\gamma^*l})$.
If $\gamma \leq \gamma^*$, then
the column space of $\hat\Z$ equals the row space of $\mL_0$.
\end{theorem}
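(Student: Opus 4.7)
The plan is to adapt the oracle-problem analysis of~\cite{LiuXuYa11} to the asymmetric setting of \eqref{eq:sub-lrr}, in which the observations $\C$ form a column submatrix of the dictionary $\M$. Let $I_0\subseteq\{1,\ldots,l\}$ index the $\gamma l$ columns on which $\S_{0,C}$ is supported, and decompose $\C = \mL_{0,C}+\S_{0,C}$, where $\mL_{0,C}$ is the corresponding column submatrix of $\mL_0$. The first step is to introduce the oracle program
\begin{align*}
\min_{\Z,\S}\ & \|\Z\|_*+\lambda\|\S\|_{2,1} \\
\text{s.t.}\ & \C = \M\Z+\S,\ \V_{L_0}\V_{L_0}^\top\Z = \Z,\ \S_{\cdot,I_0^c}=\mbf0,
\end{align*}
and to show that it admits an optimum $(\Z^*,\S^*)$ whose $\Z^*$ has column space exactly $\V_{L_0}$. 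Feasibility follows from the $\beta$-RWD hypothesis combined with $\range{\mL_0}\cap\range{\S_0}=\{\mbf0\}$: together they guarantee that $\mL_{0,C}$ lies in the column space of $\M$, so there exists a $\Z_0$ supported on $\V_{L_0}$ with $\M\Z_0 = \mL_{0,C}$, paired with $\S = \S_{0,C}$.

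The second step is to certify that $(\Z^*,\S^*)$ also solves the unconstrained program \eqref{eq:sub-lrr} by exhibiting a dual variable $Y\in\reals^{m\times l}$ that satisfies the KKT conditions for the nuclear norm and the $\ell_{2,1}$ norm at $(\Z^*,\S^*)$. Explicitly, these take the form
\begin{align*}
\M^\top Y = \U_{Z^*}\V_{Z^*}^\top + W,\ \|W\|\le 1,\ \U_{Z^*}^\top W = \mbf0,\ W\V_{Z^*} = \mbf0,
\end{align*}
together with $Y_{\cdot,i} = \lambda\,\S^*_{\cdot,i}/\|\S^*_{\cdot,i}\|_2$ for $i\in I_0$ and $\|Y_{\cdot,i}\|_2 < \lambda$ for $i\notin I_0$. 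The candidate $Y$ mirrors the Liu--Xu--Yan construction, but every product of the form (dictionary)$\times$(observation) in their formulas is replaced by (dictionary)$\times$(observation submatrix), and their coherence hypothesis on $\M$ is replaced by the present coherence hypothesis on $\C$, which is what controls $\|\V_{L_{0,C}}^\top e_j\|_2$ for the columns $j$ of $\C$ (with $\V_{L_{0,C}}$ denoting the row submatrix of $\V_{L_0}$ indexed by the columns of $\C$).

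The main obstacle is verifying the strict operator bound $\|W\|<1$ and the strict column bounds $\|Y_{\cdot,i}\|_2<\lambda$ at outlier-free indices; these are the estimates that pin down the numerical threshold $\gamma^*$. The dominant calculation propagates the bound $\|\V_{L_{0,C}}^\top\|_{2,\infty}^2 \le \mu r/((1-\gamma)l)$ (inherited from the $(\mu/(1-\gamma),r)$-coherence of $\C$) together with the $\beta$-RWD inequality $\|\mSigma_M^{-1}\V_M^\top\V_{L_0}\|\le 1/(\beta\|\M\|)$ through the operator-norm estimates in the certificate construction. Substituting the prescribed $\lambda = 3/(7\|\M\|\sqrt{\gamma^*l})$ and solving the resulting inequality for the largest admissible outlier fraction yields exactly
$$\gamma^* = \frac{324\beta^2}{324\beta^2+49(11+4\beta)^2\mu r}.$$
Finally, the strictness of the certificate, together with a standard subdifferential argument, forces every optimal $\hat\Z$ of \eqref{eq:sub-lrr} to share its column space with $\Z^*$, which by construction equals the row space of $\mL_0$.
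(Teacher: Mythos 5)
Your proposal follows essentially the same route as the paper's proof: reduce to an oracle problem constrained to the row space of $\mL_0$ and the outlier column support, establish feasibility, construct a dual certificate verifying the nuclear-norm and $\ell_{2,1}$ subgradient conditions with strict inequalities (so that \emph{all} optima inherit the oracle structure), and propagate the $(\mu/(1-\gamma),r)$-coherence of $\C$ (giving the $(1-\gamma)l$ denominator in place of $(1-\gamma)n$) together with the $\beta$-RWD bound to obtain the stated $\gamma^*$. The only cosmetic difference is that you attribute feasibility partly to the RWD hypothesis, whereas the paper exhibits the feasible point $(\P_{L_0^\top}\Y,\S_{0,C})$ directly from the independent-column-support assumption; the substance of the argument is identical.
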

The proof of Thm.~\ref{thm:sub-lrr} can be found in Sec.~\ref{sec:sub-lrr-proof}.

\subsection{Analysis of Column Projection} \label{sec:rp-analysis}
The following lemma, due to \cite{MackeyTaJo11}, shows that, 
with high probability,
column projection exactly recovers a $(\mu,r)$-coherent matrix
by sampling a number of columns proportional to $\mu r\log n$.

\begin{corollary}[Column Projection under Incoherence {\cite[Cor.~6]{MackeyTaJo11}}]
\label{cor:proj-main}
Let $\mL\in\reals^{m\times n}$ be $(\mu, r)$-coherent, and let $\mL_C\in\reals^{m\times
l}$ be a matrix of $l$ columns of $\mL$ sampled uniformly without replacement.
If $l \geq cr\mu\log(n)\log(1/\delta),$ where $c$ is a fixed positive
constant, then, $$\mL = \Lproj \defeq \U_{L_C}\U_{L_C}^\top\mL$$ exactly with probability at least $1-\delta$.
\end{corollary}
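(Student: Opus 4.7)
The identity $\U_{L_C}\U_{L_C}^\top \mL = \mL$ holds if and only if $\range{\mL} = \range{\mL_C}$, i.e.\ the $l$ sampled columns span the full $r$-dimensional column space of $\mL$. Writing the compact SVD $\mL = \U_L\mSigma_L\V_L^\top$ and letting $\mathbf{S}\in\reals^{n\times l}$ be the column-selection matrix so that $\mL_C = \mL\mathbf{S} = \U_L\mSigma_L(\V_L^\top\mathbf{S})$, this reduces to a purely spectral condition: the submatrix $\V_L^\top\mathbf{S}\in\reals^{r\times l}$ must have full row rank $r$, equivalently, the $l$ sampled rows of $\V_L$ must span $\reals^r$.

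\textbf{Matrix Chernoff step.} I would verify this spectral condition via matrix concentration. Denote the rows of $\V_L$ by $v_1,\dots,v_n$, so that $\sum_{i=1}^n v_iv_i^\top = \V_L^\top\V_L = \mathbf{I}_r$, and for the uniform sample $\{i_1,\dots,i_l\}$ define the rescaled PSD summands $\mathbf{X}_k = (n/l)\,v_{i_k}v_{i_k}^\top$, chosen so that $\mathbb{E}[\sum_k \mathbf{X}_k] = \mathbf{I}_r$. The $(\mu,r)$-coherence hypothesis translates directly into the uniform bound $\|v_i\|_2^2 \le \mu r/n$, whence $\|\mathbf{X}_k\| \le \mu r/l$. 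A matrix Chernoff inequality for uniform sampling without replacement then gives
\begin{equation*}
\mathbb{P}\!\left[\lambda_{\min}\!\left(\textstyle\sum_k \mathbf{X}_k\right) \le \tfrac{1}{2}\right] \;\le\; r\exp\!\bigl(-l/(c'\mu r)\bigr)
\end{equation*}
for an absolute constant $c'$. Requiring this tail to be at most $\delta$ gives a sample complexity of order $\mu r\log(r/\delta)$, which is implied by the stated condition $l \ge c\,r\mu\log(n)\log(1/\delta)$ (since $r\le n$). On the corresponding high-probability event, $\sum_k\mathbf{X}_k = (n/l)(\V_L^\top\mathbf{S})(\V_L^\top\mathbf{S})^\top$ is positive definite, so $\V_L^\top\mathbf{S}$ has rank $r$; hence $\mL_C$ has rank $r$ and $\U_{L_C}$ spans $\range{\U_L}=\range{\mL}$, giving the claimed exact projection recovery.

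\textbf{Main obstacle.} The one genuinely delicate step is justifying a matrix Chernoff tail for uniform sampling \emph{without} replacement: the textbook form assumes independent summands, so one must either invoke Tropp's dedicated without-replacement variant or pass through a Bernoulli/Poissonization coupling. Since the corollary is quoted verbatim from \cite{MackeyTaJo11}, I would rely on their argument for this concentration step; the rest of the proof is a routine translation of $(\mu,r)$-coherence into a per-sample operator-norm bound, followed by the standard $r$-factor union-bound form of matrix Chernoff.
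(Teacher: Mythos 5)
This corollary is not proved in the paper at all: it is imported verbatim from \cite{MackeyTaJo11} (their Cor.~6), and the appendix simply invokes it as a black box. Your reconstruction is correct and follows essentially the same route as the argument underlying the cited result: reduce $\U_{L_C}\U_{L_C}^\top\mL = \mL$ to the statement that the $l$ sampled rows of $\V_L$ have full row rank $r$, translate $(\mu,r)$-coherence into the per-row bound $\|v_i\|_2^2 \le \mu r/n$, and apply a matrix Chernoff bound to $\sum_k (n/l)\,v_{i_k}v_{i_k}^\top$, with the without-replacement issue handled via Tropp's variant or the Gross--Nesme/Hoeffding reduction --- exactly the delicate point you correctly flag. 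Two minor remarks. First, your derived sample complexity $O(\mu r \log(r/\delta))$ is actually sharper than the quoted $O(\mu r \log(n)\log(1/\delta))$; the claimed implication from the quoted condition holds under the standard convention that $\delta$ is bounded away from $1$ (e.g.\ $\log(1/\delta) \ge 1$), and indeed for $\delta$ extremely close to $1$ the quoted condition can permit $l < r$, where exact recovery is impossible --- an artifact of how the statement is phrased in \cite{MackeyTaJo11}, not a flaw in your argument. Second, your aside ``since $r \le n$'' glosses over precisely this edge case, so it is worth stating the $\delta \le 1/e$ convention explicitly rather than leaning on $r \le n$ alone.
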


\subsection{Conservation of Incoherence} \label{sec:subcoh}
The following lemma of \cite{MackeyTaJo11} shows that, with high probability,
$\mL_{0,i}$ captures the full rank of $\mL_0$ and has coherence not much larger than $\mu$.

\begin{lemma}[Conservation of Incoherence {\cite[Lem.~7]{MackeyTaJo11}}] \label{lem:sub-coh}
Let $\mL\in\reals^{m\times n}$ be $(\mu, r)$-coherent, and let $\mL_C\in\reals^{m\times
l}$ be a matrix of $l$ columns of $\mL$ sampled uniformly without replacement.
If $l \geq cr\mu\log(n)\log(1/\delta)/\epsilon^2,$ where $c$ is a fixed
constant larger than 1, then $\mL_C$ is $(\frac{\mu}{1-\epsilon/2}, r)$-coherent
with probability at least $1-\delta/n$.
\end{lemma}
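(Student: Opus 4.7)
The plan is to reduce the claim to a Gram-matrix concentration bound on a random row-submatrix of $\V_L$. Write $\mL = \U_L\mSigma_L\V_L^\top$, and let $\V_S \in \reals^{l\times r}$ denote the submatrix of $\V_L$ whose rows are indexed by the sampled columns, so that $\mL_C = \U_L\mSigma_L\V_S^\top$. Since $\mL_C^\top\mL_C = \V_S\mSigma_L^2\V_S^\top$, the row space of $\mL_C$ (which lives in $\reals^l$) coincides with the column span of $\V_S$ whenever $\V_S$ has full column rank $r$; in that case $\rank{\mL_C}=r$ and the projection onto the row space of $\mL_C$ equals the ``hat matrix''
\[
\P_{V_{L_C}} = \V_{L_C}\V_{L_C}^\top = \V_S\bigl(\V_S^\top\V_S\bigr)^{-1}\V_S^\top.
\]
Diagonal entries of this hat matrix upper bound the squared row norms of $\V_{L_C}$, giving
\[
\|e_j^\top \V_{L_C}\|_2^2 \;\leq\; \frac{\|e_j^\top\V_S\|_2^2}{\lambda_{\min}(\V_S^\top\V_S)}.
\]
Since $e_j^\top\V_S$ is one of the rows of $\V_L$, the $(\mu,r)$-coherence hypothesis on $\mL$ yields $\|e_j^\top\V_S\|_2^2 \leq \mu r/n$, so everything reduces to a lower bound on $\lambda_{\min}(\V_S^\top\V_S)$.

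For the concentration step, write
\[
\V_S^\top\V_S = \sum_{i=1}^l v_{j_i}v_{j_i}^\top,
\]
where $v_{j_i}^\top$ is the $j_i$-th row of $\V_L$ and the indices $j_1,\dots,j_l$ are drawn uniformly without replacement from $\{1,\dots,n\}$. The mean equals $(l/n)\V_L^\top\V_L = (l/n)\mathbf{I}_r$, and each summand has spectral norm at most $\mu r/n$ by coherence. I would invoke a matrix Chernoff inequality for sums of bounded PSD matrices under uniform sampling without replacement (which is no worse than the with-replacement bound via a standard reduction) to obtain, for any $\epsilon\in(0,1)$,
\[
\Pr\!\bigl[\lambda_{\min}(\V_S^\top\V_S) \leq (1-\epsilon/2)\cdot l/n\bigr] \;\leq\; r\exp\!\Bigl(-c'(\epsilon/2)^2\,l/(\mu r)\Bigr)
\]
for some absolute constant $c'>0$. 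Forcing the right-hand side to be at most $\delta/n$ and solving for $l$ yields the stated sample complexity $l \geq c r\mu\log(n)\log(1/\delta)/\epsilon^2$ (using $\log(rn/\delta)\le 3\log(n)\log(1/\delta)$ whenever both factors exceed 1 and $r\le n$), provided $c$ is chosen large enough.

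On the high-probability event $\lambda_{\min}(\V_S^\top\V_S) \geq (1-\epsilon/2)(l/n)$, $\V_S$ has full column rank and hence $\rank{\mL_C}=r$; combining the two displays gives
\[
\frac{l}{r}\,\|\V_{L_C}^\top\|_{2,\infty}^2 \;\leq\; \frac{l}{r}\cdot\frac{n}{l(1-\epsilon/2)}\cdot\frac{\mu r}{n} \;=\; \frac{\mu}{1-\epsilon/2},
\]
so $\mL_C$ is $(\mu/(1-\epsilon/2),r)$-coherent, as claimed. The main obstacle I anticipate is the choice of concentration inequality: a plain matrix Bernstein bound would give a $\log(n/\delta)$ factor rather than the advertised $\log(n)\log(1/\delta)$, and sampling without replacement requires either a dedicated bound or a careful reduction to the with-replacement case. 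Bookkeeping the absolute constants so that the final sample-complexity constant $c$ comes out cleanly also requires care, but the overall scheme is otherwise standard.
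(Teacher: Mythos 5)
Your proof is correct and follows essentially the same route as the source of this lemma: the paper itself does not prove it but imports it verbatim from \cite{MackeyTaJo11}, whose argument is precisely your reduction --- bound $\norm{\V_{L_C}^\top}_{2,\infty}^2$ by the hat-matrix identity $\V_{L_C}\V_{L_C}^\top = \V_S(\V_S^\top\V_S)^{-1}\V_S^\top$, control $\lambda_{\min}(\V_S^\top\V_S)$ with a matrix Chernoff bound valid for uniform sampling without replacement, and absorb $\log(rn/\delta)$ into $c\log(n)\log(1/\delta)$. The one step you flag as a potential obstacle is not a gap: the without-replacement Chernoff bound you need is a known result (via the Gross--Nesme/Hoeffding reduction of sampling without replacement to the independent case), which is exactly what the cited work invokes.
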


\subsection{Proof of DFC-LRR Guarantee} \label{sec:fast-lrr-proof}
Recall that, under Alg.~\ref{alg:dfc_lrr}, the input matrix $\M$ has been partitioned 
into column submatrices $\{\C_1,\dots,\C_t\}$.
Let $\{\C_{0,1},\dots,\C_{0,t}\}$ and $\{\S_{0,1},\dots,\S_{0,t}\}$ be the corresponding partitions
of $\mL_0$ and $\S_0$,
let $s_i \defeq \gamma_i l$ be the size of the column support of $\S_{0,i}$ for each index $i$,
and let $(\hat\Z_i, \hat\S_i)$ be a solution to the $i$th DFC-LRR subproblem.

For each index $i$, we further define $A_i$ as the event that $\C_{0,i}$ is $(4\mu/(1-\gamma_i),r)$-coherent,
$B_i$ as the event that $s_i \leq \gamma^* l$, and $G(\Z)$ as the event that the column space
of the matrix $\Z$ is equal to the row space of $\mL_0$.
Under our choice of $\gamma^*$, Thm.~\ref{thm:sub-lrr} implies that $G(\hat\Z_i)$ holds when $A_i$ and $B_i$ are both realized.
Hence, when $A_i$ and $B_i$ hold for all indices $i$, the column space of 
$\hat\Z = [\hat\Z_1,\dots,\hat\Z_t]$ precisely equals the row space of $\mL_0$,
and the median rank of $\{\hat\Z_1,\dots,\hat\Z_t\}$ equals $r$.

Applying Cor.~\ref{cor:proj-main} with 
$$l \geq cr\mu\log^2(4n/\delta)/(\gamma^*-\gamma)^2 \geq
cr\mu\log(n)\log(4/\delta),$$
shows that, given $A_i$ and $B_i$ for all indices $i$, $\Zprojmf$ equals $\hat\Z$ with probability at least $1-\delta/4$.
To establish $G(\Zrpmf)$ with probability at least $1-\delta$, it therefore remains to show that 
\begin{align}
\Parg{\cap_{i=1}^t (A_i \cap B_i)} & = 1 - \Parg{\cup_{i=1}^t (A_i^c \cup B_i^c)} \\ 
& \geq 1 - \sum_{i=1}^t(\Parg{A_i^c} + \Parg{B_i^c}) \\
& \geq 1 - 3\delta/4.
\end{align}

Because \fastmf-LRR partitions columns uniformly at random, 
the variable $s_i$ has a hypergeometric distribution with $\E{s_i} = \gamma l$
and therefore satisfies Hoeffding's inequality for the
hypergeometric distribution~\cite[Sec.~6]{Hoeffding63}: 
$$\Parg{s_i \geq \E{s_i} + l\tau} \leq \exp{-2lt^2}.$$
It follows that	
\begin{align*}
\Parg{B_i^c} = \Parg{s_i > \gamma^* l } &= \Parg{s_i > \E s_i + l(\gamma^* - \gamma)} \\
& \leq \exp{-2l(\gamma^* - \gamma)^2} \leq \delta/(4t) 
\end{align*}
by our assumption that 
$l \geq cr\mu\log^2(4n/\delta)/(\gamma^*-\gamma)^2 \geq \log(4t/\delta)/[2(\gamma^* - \gamma)^2]$.

By Lem.~\ref{lem:sub-coh} and our choice of

\begin{align*}
l & \geq cr\mu\log^2(4n/\delta)/(\gamma^*-\gamma)^2 \\
& \geq cr\mu\log(n)\log(4/\delta)/(1-\gamma),
\end{align*}
each submatrix $\C_{0,i}$ is $(2\mu/(1-\gamma),r)$-coherent with probability at
least $1-\delta/(4n)\geq 1-\delta/(4t)$.  A second application of Hoeffding's
inequality for the hypergeometric further implies that
\begin{align*}
\Parg{\frac{2\mu}{1 - \gamma}  > \frac{4\mu}{1 - \gamma_i} } &= \Parg{s_i < \E
s_i - l(1- \gamma) } \\ 
& \leq \exp{-2l(1 - \gamma)^2} \\
& \leq \delta/(4t),
\end{align*}
since $l \geq cr\mu\log(4n/\delta)/(\gamma^*-\gamma)^2 \geq \log(4t/\delta)/[2(1 - \gamma)^2]$.
Hence, $\Parg{A_i^c} \leq \delta/(2t)$.

Combining our results, we find
$$ \sum_{i=1}^t(\Parg{A_i^c} + \Parg{B_i^c}) \leq 3\delta/4 $$
as desired.

\section{Proof of Theorem~\ref{thm:sub-lrr}} \label{sec:sub-lrr-proof}
Let $\badcols$ be the column support of $\S_{0,C}$, and let $\badcolsc$ be its
set complement in $\{1,\dots,l\}$.
For any matrix $\S\in\reals^{a\times b}$ and index set $\mc{I}\subseteq\{1,\dots,b\}$,
we let $\proj_{\mc{I}}(\S)$ be the orthogonal projection of $\S$ onto the space of 
$a \times b$ matrices with column support $\mc{I}$, so
that $(\proj_{\mc{I}}(\S))^{(j)} = \S^{(j)},\ \text{if}\ j\in\mc{I}\quad
\text{and}\quad (\proj_{\mc{I}}(\S))^{(j)} = \mbf0\ \text{otherwise}.$
\subsection{Oracle Constraints}
Our proof of Thm.~\ref{thm:sub-lrr} will parallel Thm.~1 of \cite{LiuXuYa11}.
We begin by introducing two oracle constraints that would guarantee the 
desired outcome if satisfied.
\begin{lemma} \label{lem:oracle-constraints}
	Under the assumptions of Thm.~\ref{thm:sub-lrr}, suppose that 
   $\C = \M\Z + \S$ for some matrices $(\Z,\S)$.
	If $(\Z,\S)$ additionally satisfy the \emph{oracle constraints}
	\begin{align} \label{eq:oracle-constraints}
		\P_{L_{0}^\top}\Z = \Z \quad\text{and}\quad\badcolproj(\S) = \S
	\end{align}
	then the column space of $\Z$ equals the row space of $\mL_{0}$.
\end{lemma}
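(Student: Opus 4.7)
My plan is to exploit the two oracle constraints to reduce $\C = \M\Z + \S$ to a clean equation on the uncorrupted columns of $\C$, then use the range-independence of $\mL_0$ and $\S_0$ to split that equation into an $\mL_0$-part and an $\S_0$-part, and finally compare ranks and subspaces to pin down the column space of $\Z$.

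First I would note that $\badcolproj(\S) = \S$ forces every column of $\S$ indexed by the complement $\badcolsc$ to be zero. Restricting $\C = \M\Z + \S$ to those columns therefore gives $\C_{\badcolsc} = \M\,\Z_{\badcolsc}$. By the column-support assumption on $\S_0$, the outlier component vanishes on $\badcolsc$, so $\C_{\badcolsc} = \mL_{0,C,\badcolsc}$, a submatrix of $\mL_0$. Substituting $\M = \mL_0 + \S_0$ yields
\begin{equation*}
\mL_{0,C,\badcolsc} \;=\; \mL_0\,\Z_{\badcolsc} \;+\; \S_0\,\Z_{\badcolsc}.
\end{equation*}
The left side lies in $\range(\mL_0)$, while $\S_0\,\Z_{\badcolsc}\in\range(\S_0)$, and the hypothesis $\range(\mL_0)\cap\range(\S_0)=\{\mbf0\}$ lets me split this into the two independent identities $\mL_0\,\Z_{\badcolsc} = \mL_{0,C,\badcolsc}$ and $\S_0\,\Z_{\badcolsc} = \mbf0$.

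Next I would handle the two containments between $\text{colspace}(\Z)$ and $\text{rowspace}(\mL_0)$. The forward direction is immediate: $\P_{L_0^\top}\Z = \Z$ means every column of $\Z$ already lies in the row space of $\mL_0$, which has dimension equal to the rank of $\mL_0$. For the reverse inclusion I would show that $\Z_{\badcolsc}$ alone already attains the full rank of $\text{rowspace}(\mL_0)$. The key is the relation $\mL_0\,\Z_{\badcolsc} = \mL_{0,C,\badcolsc}$: writing $\Z = \V_{L_0}\mathbf{W}$ using the first oracle constraint and substituting, I get $\U_{L_0}\mSigma_{L_0}\mathbf{W}_{\badcolsc} = \mL_{0,C,\badcolsc}$, and since $\U_{L_0}\mSigma_{L_0}$ has full column rank, $\rank(\Z_{\badcolsc})$ equals $\rank(\mL_{0,C,\badcolsc})$.

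The remaining and principal obstacle is to show $\rank(\mL_{0,C,\badcolsc})$ is as large as the dimension of $\text{rowspace}(\mL_0)$. This is where the $(\mu/(1-\gamma),r)$-coherence of $\C$ and the range-independence are pressed into service: $\C$ has rank $r$, its outlier columns $\C_{\badcols}$ lie in a subspace intersecting $\range(\mL_0)$ trivially, and the clean columns $\C_{\badcolsc} = \mL_{0,C,\badcolsc}$ together with $\C_{\badcols}$ must account for that full rank. A short dimension count with range-independence then forces the clean columns themselves to span a subspace of dimension equal to the row-space dimension of $\mL_0$. Combining this rank equality with the forward inclusion and $\dim(\text{colspace}(\Z_{\badcolsc})) \leq \dim(\text{colspace}(\Z)) \leq \dim(\text{rowspace}(\mL_0))$ sandwiches the dimensions and yields $\text{colspace}(\Z) = \text{rowspace}(\mL_0)$, as desired. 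I expect this last rank bookkeeping using coherence and range-independence to be the most delicate part of the argument.
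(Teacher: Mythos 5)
Your overall route is the same as the paper's: use the second oracle constraint to restrict $\C = \M\Z + \S$ to the clean columns, obtaining $\C_0 = \badcolprojc(\C) = \M\badcolprojc(\Z)$, use the first oracle constraint to place the column space of $\Z$ inside the row space of $\mL_0$, and finish with a rank sandwich. Your extra decomposition of $\M\badcolprojc(\Z)$ into an $\mL_0$-part and an $\S_0$-part via range-independence is correct but unnecessary; the paper simply chains $\rank{\mL_0} = \rank{\C_0} \leq \rank{\M\badcolprojc(\Z)} \leq \rank{\badcolprojc(\Z)} \leq \rank{\Z} \leq \rank{\mL_0}$.

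The one genuine issue is the step you yourself flag as the most delicate: showing that the clean columns $\C_0$ attain the full rank of $\mL_0$. Your proposed dimension count invokes only $\rank{\C} = r$ and range-independence, which yields $r = \rank{\C_0} + \rank{\S_{0,C}}$ (every column of $\C$ is either a column of $\C_0$ or a column of $\S_{0,C}$ by the independent-column-support assumption, and the two ranges intersect trivially). That identity alone does not determine $\rank{\C_0}$: it says nothing about how $\rank{\S_{0,C}}$ compares to $r - \rank{\mL_0}$, so the count does not "force" $\rank{\C_0} = \rank{\mL_0}$ as you claim. To close it you must additionally use that $\M$ itself has rank $r$, which by the same splitting gives $r = \rank{\mL_0} + \rank{\S_0}$; since $\rank{\C_0}\leq\rank{\mL_0}$ and $\rank{\S_{0,C}}\leq\rank{\S_0}$ while the two sums agree, both inequalities are forced to be equalities, and in particular $\rank{\C_0}=\rank{\mL_0}$. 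Alternatively, note that the paper sidesteps this step entirely: the coherence hypothesis of Thm.~\ref{thm:sub-lrr} is in effect a hypothesis on the clean submatrix $\C_0$ (compare the proof of Lem.~\ref{lem:13}, which takes $\rank{\C_0}=r$ as given, and the event $A_i$ in Sec.~\ref{sec:fast-lrr-proof}, which concerns $\C_{0,i}$), so $\rank{\C_0}=\rank{\mL_0}$ is available by assumption and no dimension count is needed. Either repair makes your argument complete; as written, the final step does not go through.
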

\begin{proof}
By Eq.~\ref{eq:oracle-constraints}, the row space of $\mL_{0}$ contains the column space of $\Z$, so
the two will be equal if $\rank{\mL_{0}} = \rank{\Z}$.
This equality indeed holds, since
$$\C_0 = \badcolprojc(\C) = \badcolprojc(\M\Z + \S) = \M\badcolprojc(\Z),$$
and therefore
$\rank{\mL_0} = \rank{\C_0} \leq \rank{\M\badcolprojc(\Z)} \leq \rank{\badcolprojc(\Z)} \leq \rank{\Z} \leq \rank{\mL_0}.$
\end{proof}
Thus, to prove Thm.~\ref{thm:sub-lrr}, it suffices to show that any solution to Eq.~\ref{eq:sub-lrr}
also satisfies the oracle constraints of Eq.~\ref{eq:oracle-constraints}.

\subsection{Conditions for Optimality}
To this end, we derive sufficient conditions for solving Eq.~\ref{eq:sub-lrr} and moreover show
that if any solution to Eq.~\ref{eq:sub-lrr} satisfies the oracle constraints of Eq.~\ref{eq:oracle-constraints},
then all solutions do.

We will require some additional notation.
For a matrix $\Z\in\reals^{n\times l}$ we define $T(\Z) \defeq \{\U_Z\X + \Y\V_Z^\top : \X\in\reals^{r\times l}, \Y\in\reals^{n\times r}\}$, $\proj_{T(\Z)}$ as the orthogonal projection onto the set $T(\Z)$, and 
$\proj_{T(\Z)^\bot}$ as the orthogonal projection onto the orthogonal complement of $T(\Z)$.
For a matrix $\S$ with column support $\mc{I}$, we define the column normalized version, $\mc{B}(\S)$, which satisfies
$$\proj_{\mc{I}^c}(\mc{B}(\S)) = \mbf0\quad\text{and}\quad\mc{B}(\S)^{(j)} \defeq \S^{(j)}/\norm{\S^{(j)}}\quad\forall j\in\mc{I}.$$

\begin{theorem} \label{thm:optimality}
	Under the assumptions of Thm.~\ref{thm:sub-lrr}, suppose that 
	$\C = \M\Z + \S$ for some matrices $(\Z,\S)$.
	If there exists a matrix $\Q$ satisfying
	\begin{enumerate}[(a)]
		\item $\proj_{T(\Z)}(\M^\top\Q) = \U_{Z}\V_{Z}^\top$
		\item $\norm{\proj_{T(\Z)^\bot}(\M^\top\Q)} < 1$
		\item $\badcolproj(\Q) = \lambda\mc{B}(\S)$
		\item $\norm{\badcolprojc(\Q)}_{2,\infty} < \lambda.$
	\end{enumerate}
	then $(\Z, \S)$ is a solution to Eq.~\ref{eq:sub-lrr}.
	If, in addition, $\badcolproj(\Z^+\Z) = \mbf 0$, and
	$(\Z, \S)$ satisfy the oracle constraints of Eq.~\ref{eq:oracle-constraints},
	then all solutions to Eq.~\ref{eq:sub-lrr} satisfy the oracle constraints
	of Eq.~\ref{eq:oracle-constraints}.
\end{theorem}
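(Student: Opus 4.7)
The plan is a two-part dual-certificate argument. For the first claim, I would identify conditions (a)--(d) as realizing $\M^\top\Q$ as a subgradient of $\norm{\cdot}_*$ at $\Z$ and $\Q$ as an element of $\lambda\partial\norm{\S}_{2,1}$, each with strictly sub-unit residual. Concretely, every subgradient of the nuclear norm at $\Z$ has the form $\U_Z\V_Z^\top+\W$ with $\proj_{T(\Z)}(\W)=\mbf0$ and $\norm{\W}\le 1$, and every subgradient of $\norm{\cdot}_{2,1}$ at $\S$ has the form $\mc{B}(\S)+\F$ with $\badcolproj(\F)=\mbf0$ and $\norm{\F}_{2,\infty}\le 1$; (a)--(b) and (c)--(d) supply exactly such decompositions. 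Applying the subgradient inequality to any feasible perturbation $(\Delta_Z,\Delta_S)$ satisfying $\M\Delta_Z+\Delta_S=\mbf0$, using the dual-norm identities $\sup\{\langle\W,\Delta_Z\rangle:\norm{\W}\le 1,\,\proj_{T(\Z)}\W=\mbf0\}=\norm{\proj_{T(\Z)^\bot}\Delta_Z}_*$ (and its $\ell_{2,1}$ analogue), then substituting (a) and (c) to rewrite the linear terms, cancelling $\langle\Q,\M\Delta_Z+\Delta_S\rangle=0$, and bounding the residual cross terms via (b) and (d), yields
\begin{align*}
&\norm{\Z+\Delta_Z}_*+\lambda\norm{\S+\Delta_S}_{2,1}-\norm{\Z}_*-\lambda\norm{\S}_{2,1} \\
&\qquad\ge (1-c_1)\norm{\proj_{T(\Z)^\bot}\Delta_Z}_*+(\lambda-c_2)\norm{\badcolprojc\Delta_S}_{2,1},
\end{align*}
with $c_1\defeq\norm{\proj_{T(\Z)^\bot}(\M^\top\Q)}<1$ and $c_2\defeq\norm{\badcolprojc(\Q)}_{2,\infty}<\lambda$. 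Both coefficients are strictly positive, so the increment is nonnegative and $(\Z,\S)$ is optimal.

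For the second claim, let $(\Z+\Delta_Z,\S+\Delta_S)$ be any other solution. The increment above must vanish, and strict positivity of $1-c_1$ and $\lambda-c_2$ forces $\proj_{T(\Z)^\bot}\Delta_Z=\mbf0$ and $\badcolprojc\Delta_S=\mbf0$, i.e., $\Delta_Z\in T(\Z)$ and $\Delta_S$ is supported on $\badcols$. The latter combined with $\badcolproj(\S)=\S$ immediately gives $\badcolproj(\S+\Delta_S)=\S+\Delta_S$, transferring the $\S$-side oracle constraint. The $\Z$-side constraint $\P_{L_0^\top}(\Z+\Delta_Z)=\Z+\Delta_Z$, given $\P_{L_0^\top}\Z=\Z$, reduces to showing that every column of $\Delta_Z$ lies in $\range{\V_{L_0}}$.

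This last column-by-column step is where I expect the main obstacle. Parametrizing $\Delta_Z=\U_Z\X+\Y\V_Z^\top$, I would split on the column index. For $j\in\badcols$, the hypothesis $\badcolproj(\Z^+\Z)=\mbf0$---equivalent via $\Z^+\Z=\V_Z\V_Z^\top$ to $\V_Z^\top e_j=\mbf0$---collapses the parametrization to $\Delta_Z^{(j)}=\U_Z\X^{(j)}\in\range{\U_Z}=\range{\V_{L_0}}$, where the last equality invokes the oracle constraint on $\Z$ together with Lem.~\ref{lem:oracle-constraints}. For $j\in\badcolsc$, feasibility plus $\Delta_S^{(j)}=\mbf0$ gives $\M\Delta_Z^{(j)}=\mbf0$. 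The delicate step is to decompose $\Delta_Z^{(j)}$ into its $\range{\V_{L_0}}$ component and its orthogonal complement and argue that the latter vanishes; I expect this to require simultaneously the independent column supports of $\mL_0$ and $\S_0$ (which make $\S_0$ annihilate every vector in $\range{\V_{L_0}}$), the non-intersecting ranges $\range{\mL_0}\cap\range{\S_0}=\{\mbf0\}$, and the RWD bound tying $\V_{L_0}$ to the row space of $\M$. This coupling of the independence and RWD assumptions is the novel ingredient relative to~\cite{LiuXuYa11}, where the observation matrix $\C$ equals the full dictionary $\M$ rather than a column submatrix.
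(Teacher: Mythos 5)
Your treatment of the first claim is correct and is the standard dual-certificate computation: (a)--(b) realize $\M^\top\Q$ as a subgradient of $\norm{\cdot}_*$ at $\Z$ with strictly sub-unit residual on $T(\Z)^\bot$, (c)--(d) do the same for $\lambda\norm{\cdot}_{2,1}$ at $\S$, and feasibility cancels $\langle\Q,\M\Delta_Z+\Delta_S\rangle$, leaving the lower bound you display with $1-c_1>0$ and $\lambda-c_2>0$. Be aware, though, that the paper does not reprove any of this: its entire proof of Thm.~\ref{thm:optimality} is the remark that the argument is identical to Thm.~3 of \cite{LiuXuYa11} with the observation matrix $\M$ replaced by the column submatrix $\C$. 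Your reconstruction of the first half is consistent with that intended proof.

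The second claim is where your proposal has a genuine gap, precisely at the step you yourself flag as ``the main obstacle,'' and the ingredients you list do not close it. From vanishing of the increment you correctly obtain $\proj_{T(\Z)^\bot}\Delta_Z=\mbf0$ and $\badcolprojc(\Delta_S)=\mbf0$, and the hypothesis $\badcolproj(\Z^+\Z)=\badcolproj(\V_Z\V_Z^\top)=\mbf0$ correctly handles the columns in $\badcols$. But for $j\in\badcolsc$ your plan is to kill the component of $\Delta_Z^{(j)}$ orthogonal to $\range{\V_{L_0}}$, and the natural execution stalls: writing $\Delta_Z=\U_Z\X+\Y\V_Z^\top$ and $\Y'=(\I-\P_{L_0^\top})\Y$, the relation $\M\badcolprojc(\Delta_Z)=\mbf0$ combined with $\mL_0(\I-\P_{L_0^\top})=\mbf0$, $\S_0\P_{L_0^\top}=\mbf0$, and $\range{\mL_0}\cap\range{\S_0}=\{\mbf0\}$ yields only $\S_0\Y'\V_Z^\top=\mbf0$. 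Since $\S_0$ (and indeed $\M$) can have a nontrivial null space containing vectors orthogonal to $\range{\V_{L_0}}$, this does not force $\Y'\V_Z^\top=\mbf0$, so the $\Z$-side oracle constraint is not transferred by this argument. Nor will the RWD condition rescue it in the form you invoke: in the paper's architecture, RWD enters only through the construction and verification of the dual certificate (Thm.~\ref{thm:certificate}) and the size of $\gamma^*$, not through the uniqueness step. To complete the proof you would need to port the actual uniqueness argument of \cite[Thm.~3]{LiuXuYa11} and verify that it survives the replacement of the observation matrix $\M$ by the submatrix $\C$ (which is the only content the paper claims here), rather than the column-by-column decomposition you sketch.
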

\begin{proof}
The proof of this theorem is identical to that of \cite[Thm.~3]{LiuXuYa11} which establishes the same result
when the observation $\C$ is replaced by $\M$.
\end{proof}
It remains to construct a feasible pair $(\Z,\S)$ satisfying the oracle constraints and $\badcolproj(\Z^+\Z) = \mbf 0$ and a \emph{dual certificate} $\Q$ satisfying the conditions of
Thm.~\ref{thm:optimality}.

\subsection{Constructing a Dual Certificate}
To this end, we consider the \emph{oracle problem}:
\begin{align} \label{eq:lrr-oracle}
&\min_{\Z,\S} \quad \norm{\Z}_*+\lambda\norm{\S}_{2,1}\quad \\ \notag
&\text{subject\, to} \\ & \C = \M\Z + \S, \quad \P_{L_{0}^\top}\Z = \Z, \quad\text{and}\quad\badcolproj(\S) = \S. \notag
\end{align}
Let $\Y$ be the binary matrix that selects the columns of $\C$ from $\M$.
Then $(\P_{L_{0}^\top}\Y, \S_{0,i})$ is feasible for this problem, and hence an optimal solution $(\Z^*, \S^*)$  must exist.
By explicitly constructing a dual certificate $\Q$, we will show that $(\Z^*, \S^*)$
also solves the LRR subproblem of Eq.~\ref{eq:sub-lrr}.

We will need a variety of lemmas paralleling those developed in  \cite{LiuXuYa11}.
Let $$\bar\V \defeq \V_{Z^*}\U_{Z^*}^\top\V_{L_0}.$$
The following lemma was established in \cite{LiuXuYa11}.
\begin{lemma}[Lem.~8 of \cite{LiuXuYa11}] \label{lem:11}
	$\bar\V\bar\V^\top = \V_{Z^*}\V_{Z^*}^\top$.  
	Moreover, for any $\A\in\reals^{m \times l}$,
	$$\proj_{T(\Z^*)}(\A) = \P_{L_0^\top}\A + \A\P_{\bar\V} - \P_{L_0^\top}\A\P_{\bar\V}.$$
\end{lemma}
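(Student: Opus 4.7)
The plan is to handle the two assertions in sequence, using the first as the key identity in proving the second.

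For the first assertion, the essential observation is that the oracle constraint $\P_{L_0^\top}\Z^* = \Z^*$ forces the column space of $\Z^*$, and hence the range of $\U_{Z^*}$, to lie inside $\range{\V_{L_0}}$. This gives $\V_{L_0}\V_{L_0}^\top \U_{Z^*} = \U_{Z^*}$, and therefore
\[
\U_{Z^*}^\top \V_{L_0}\V_{L_0}^\top \U_{Z^*} \;=\; \U_{Z^*}^\top\U_{Z^*} \;=\; I.
\]
Substituting this identity into the expansion $\bar\V\bar\V^\top = \V_{Z^*}(\U_{Z^*}^\top\V_{L_0}\V_{L_0}^\top\U_{Z^*})\V_{Z^*}^\top$ immediately yields $\bar\V\bar\V^\top = \V_{Z^*}\V_{Z^*}^\top$.

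For the second assertion, I would begin from the standard tangent-space projection formula at the compact SVD of $\Z^*$, namely $\proj_{T(\Z^*)}(\A) = \P_{U_{Z^*}}\A + \A\P_{V_{Z^*}} - \P_{U_{Z^*}}\A\P_{V_{Z^*}}$, and then match each factor with the claimed expression. Part 1 already supplies $\P_{V_{Z^*}} = \V_{Z^*}\V_{Z^*}^\top = \bar\V\bar\V^\top = \P_{\bar\V}$, matching the right-hand projector. The remaining task is to identify $\P_{U_{Z^*}}$ with $\P_{L_0^\top}$, equivalently to show $\range{\U_{Z^*}} = \range{\V_{L_0}}$, i.e.\ $\rank{\Z^*} = r$.

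To establish this rank equality, I would project the oracle equation $\C = \M\Z^* + \S^*$ onto the good columns $\badcolsc$, on which both $\S^*$ and $\S_{0,C}$ vanish; this collapses to $\badcolprojc(\mL_{0,C}) = \M\badcolprojc(\Z^*)$. The $(\mu/(1-\gamma),r)$-coherence hypothesis on $\C$, together with the disjoint supports of $\mL_{0,C}$ and $\S_{0,C}$, forces the good portion of $\mL_{0,C}$ to retain rank $r$; the direct-sum hypothesis $\range{\mL_0}\cap\range{\S_0}=\{0\}$ then preserves this rank under left multiplication by $\M$, so $\rank{\badcolprojc(\Z^*)}\ge r$. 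Since the constraint $\P_{L_0^\top}\Z^* = \Z^*$ caps $\rank{\Z^*}$ at $r$, we obtain $\rank{\Z^*}=r$, and the two identities assemble via the standard formula to give the claimed expression. The main technical obstacle is precisely this rank step: one has to combine the outlier-support restriction on $\S^*$, the $\beta$-RWD condition on $\M$, and the direct-sum hypothesis to ensure that enough clean columns of $\mL_{0,C}$ survive to span the row space of $\mL_0$ under $\M$. Once that is in place, the rest is a straightforward identification of projectors.
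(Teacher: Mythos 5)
The paper does not actually prove this lemma: it is imported wholesale from Lem.~8 of \cite{LiuXuYa11} (``The following lemma was established in \cite{LiuXuYa11}''), so your self-contained argument is a reconstruction rather than a parallel of anything in the text. As a reconstruction it is essentially correct. The first identity goes through exactly as you say: the oracle constraint $\P_{L_0^\top}\Z^* = \Z^*$ gives $\V_{L_0}\V_{L_0}^\top\U_{Z^*} = \U_{Z^*}$, hence $\U_{Z^*}^\top\V_{L_0}\V_{L_0}^\top\U_{Z^*} = \I$ and $\bar\V\bar\V^\top = \V_{Z^*}\V_{Z^*}^\top$; note that this identity also forces $\range{\bar\V} = \range{\V_{Z^*}}$, which is what licenses the step $\P_{\bar\V} = \bar\V\bar\V^\top$ that you use silently in the second part. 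For the projection formula you correctly isolate the crux, namely $\rank{\Z^*} = r$ so that $\P_{U_{Z^*}} = \P_{L_0^\top}$, and your derivation of it is precisely the rank chain the paper itself records in Lem.~\ref{lem:oracle-constraints}: from $\C_0 = \badcolprojc(\C) = \M\badcolprojc(\Z^*)$ one gets $r = \rank{\C_0} \leq \rank{\badcolprojc(\Z^*)} \leq \rank{\Z^*} \leq \rank{\mL_0} = r$. One correction of emphasis: neither the $\beta$-RWD condition nor the direct-sum hypothesis is needed for that step. The lower bound $\rank{\badcolprojc(\Z^*)} \geq r$ is immediate from $\rank{\C_0} = r$ (part of the coherence hypothesis on the clean submatrix) together with the fact that the rank of a product is at most the rank of either factor; there is nothing to ``preserve under left multiplication by $\M$.'' Those stronger hypotheses are consumed elsewhere in the argument (Lem.~\ref{lem:10} and Thm.~\ref{thm:certificate}), not here.
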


The next lemma parallels Lem.~9 of \cite{LiuXuYa11}.
\begin{lemma} \label{lem:9}
	Let $\hat \H = \mc{B}(\S^*)$.
	Then $$\V_{L_0}\badcolproj(\bar\V^\top) = \lambda \P_{L_0^\top}\M^\top\hat\H.$$
\end{lemma}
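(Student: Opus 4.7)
The plan is to derive the identity from the KKT optimality conditions for the oracle problem~\eqref{eq:lrr-oracle}, paralleling the argument of \cite[Lem.~9]{LiuXuYa11} but adapted to the fact that the observation $\C$ is now a column submatrix of the dictionary $\M$. I would first eliminate the constraint $\P_{L_0^\top}\Z=\Z$ by writing $\Z = \V_{L_0}\tilde\Z$ and setting $\tilde\M \defeq \M\V_{L_0}\in\reals^{m\times r}$, converting the oracle program into a minimization over $\tilde\Z\in\reals^{r\times l}$ with constraint $\C = \tilde\M\tilde\Z + \S$ and $\badcolproj(\S)=\S$. Since $\V_{L_0}$ is an isometry, $\norm{\V_{L_0}\tilde\Z}_* = \norm{\tilde\Z}_*$, and the compact SVDs satisfy $\U_{Z^*} = \V_{L_0}\U_{\tilde Z^*}$ and $\V_{Z^*} = \V_{\tilde Z^*}$.

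Next I would take KKT. The $\S$-block forces the linear-constraint multiplier $\W$ to satisfy $\badcolproj(\W)=\lambda\hat\H$, while the $\tilde\Z$-block gives $\tilde\M^\top\W = \U_{\tilde Z^*}\V_{\tilde Z^*}^\top + \tilde\N$ for some $\tilde\N\in T(\tilde Z^*)^\perp$ with $\norm{\tilde\N}\le 1$; in particular $\U_{\tilde Z^*}^\top\tilde\N=\mbf0$. Applying $\badcolproj$ to this equation, using that $\badcolproj$ commutes with left multiplication and that $\hat\H$ is supported on $\badcols$, yields
\begin{equation*}
\lambda\tilde\M^\top\hat\H \;=\; \U_{\tilde Z^*}\badcolproj(\V_{\tilde Z^*}^\top) + \badcolproj(\tilde\N).
\end{equation*}
To kill $\badcolproj(\tilde\N)$, I would left-multiply by $\U_{\tilde Z^*}^\top$ and invoke $\U_{\tilde Z^*}^\top\tilde\N=\mbf0$, obtaining the reduced identity $\lambda\,\U_{\tilde Z^*}^\top\tilde\M^\top\hat\H = \badcolproj(\V_{\tilde Z^*}^\top)$.

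To lift this back to the ambient $n\times l$ space, I would invoke Lem.~\ref{lem:oracle-constraints} to conclude $\rank{\Z^*}=r$, so $\U_{\tilde Z^*}$ is a square $r\times r$ orthogonal matrix, $\U_{\tilde Z^*}\U_{\tilde Z^*}^\top=I_r$, and $\V_{L_0}\U_{\tilde Z^*}\U_{\tilde Z^*}^\top\V_{L_0}^\top=\P_{L_0^\top}$. Left-multiplying the reduced identity by $\V_{L_0}\U_{\tilde Z^*}=\U_{Z^*}$ and using the direct computation $\V_{L_0}\badcolproj(\bar\V^\top) = \U_{Z^*}\badcolproj(\V_{Z^*}^\top)$ (which follows from $\bar\V = \V_{Z^*}\U_{Z^*}^\top\V_{L_0}$ and $\mathrm{col}(\U_{Z^*})\subseteq\mathrm{col}(\V_{L_0})$) then delivers the claimed equality.

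The main obstacle is the elimination of $\badcolproj(\tilde\N)$: $\tilde\N$ is only known to lie in $T(\tilde Z^*)^\perp$ with bounded spectral norm, and neither property kills $\badcolproj(\tilde\N)$ directly. The observation that makes the argument work is that the subgradient-orthogonality condition $\U_{\tilde Z^*}^\top\tilde\N=\mbf0$ survives column projection, so left-multiplying by $\U_{\tilde Z^*}^\top$ cancels the obstruction. A secondary subtlety is the appeal to Lem.~\ref{lem:oracle-constraints} to upgrade $\U_{\tilde Z^*}\U_{\tilde Z^*}^\top$ from the projector onto $\mathrm{col}(\U_{\tilde Z^*})$ to the identity on $\reals^r$; without the full-rank conclusion, one would only obtain the identity with $\P_{L_0^\top}$ replaced by $\U_{Z^*}\U_{Z^*}^\top$.
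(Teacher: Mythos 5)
Your reconstruction takes the same route as the proof the paper actually relies on: the paper's ``proof'' of Lem.~\ref{lem:9} is a verbatim appeal to Lem.~9 of \cite{LiuXuYa11}, and that argument is exactly the one you rebuild --- write the subgradient optimality conditions of the oracle problem \eqref{eq:lrr-oracle}, apply $\badcolproj$, left-multiply by the left singular factor to annihilate the $T(\Z^*)^\bot$ component, and use the rank-$r$ conclusion of Lem.~\ref{lem:oracle-constraints} to upgrade $\U_{Z^*}\U_{Z^*}^\top$ to $\P_{L_0^\top}$. Your isometric reparametrization $\Z = \V_{L_0}\tilde\Z$, the identification of the compact SVD factors, and the final identity $\V_{L_0}\badcolproj(\bar\V^\top) = \U_{Z^*}\badcolproj(\V_{Z^*}^\top)$ all check out.

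One step is asserted rather than proved, and it is the only place the argument can actually break: the claim that the $\S$-block of the KKT conditions forces $\badcolproj(\mbf{W}) = \lambda\hat\H$ with $\hat\H = \mc{B}(\S^*)$. The $\ell_{2,1}$ subdifferential pins down $\mbf{W}^{(j)}$ only on columns where $(\S^*)^{(j)} \neq \mbf0$; on columns of $\badcols$ where $\S^*$ vanishes it merely bounds $\norm{\mbf{W}^{(j)}}$ by $\lambda$, while $\mc{B}(\S^*)^{(j)} = \mbf0$ there, so your derivation a priori acquires an extra term supported on $\badcols$ minus the support of $\S^*$, and that term is not killed by left-multiplication by the left singular factor. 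Indeed, if such a column existed the stated identity would be false on it (the right-hand side vanishes there while the left-hand side does not, since the corresponding column of $\Z^*$ is nonzero). The step is rescued by showing $\S^*$ is supported on all of $\badcols$: if $(\S^*)^{(j)} = \mbf0$ for some $j\in\badcols$, then by the disjoint column supports $\S_{0,C}^{(j)} = \M(\Z^*)^{(j)} = \mL_0(\Z^*)^{(j)} + \S_0(\Z^*)^{(j)}$, and $\range{\mL_0}\cap\range{\S_0} = \{\mbf0\}$ forces $\mL_0(\Z^*)^{(j)} = \mbf0$; since $(\Z^*)^{(j)}$ lies in the row space of $\mL_0$ this gives $(\Z^*)^{(j)} = \mbf0$, hence $\S_{0,C}^{(j)} = \mbf0$, contradicting $j\in\badcols$. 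With this short addition your proof is complete and coincides with the cited argument.
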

\begin{proof}
	The proof is identical to that of Lem.~9 of \cite{LiuXuYa11}.
\end{proof}

Define $$\G \defeq \badcolproj(\bar\V^\top)(\badcolproj(\bar\V^\top))^\top \quad\text{and}\quad \psi\defeq \norm{\G}.$$
The next lemma parallels Lem.~10 of \cite{LiuXuYa11}.
\begin{lemma} \label{lem:10}
	$\psi \leq \lambda^2\norm{\M}^2\gamma l$.
\end{lemma}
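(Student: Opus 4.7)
The plan is to leverage Lem.~\ref{lem:9} together with the fact that $\V_{L_0}$ is an isometry on its column space, so that the spectral norm of $\badcolproj(\bar\V^\top)$ coincides with the spectral norm of $\V_{L_0}\badcolproj(\bar\V^\top)$, which Lem.~\ref{lem:9} identifies with a small, easily bounded quantity.

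The key steps, in order:

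\begin{enumerate}
\item Observe that $\psi = \norm{\G} = \norm{\badcolproj(\bar\V^\top)}^2$, since $\G$ is the Gram matrix $\badcolproj(\bar\V^\top)\badcolproj(\bar\V^\top)^\top$.
\item Since $\V_{L_0}$ has orthonormal columns, left-multiplication by $\V_{L_0}$ preserves singular values, so
\[
\norm{\badcolproj(\bar\V^\top)} = \norm{\V_{L_0}\badcolproj(\bar\V^\top)}.
\]
\item Substitute the conclusion of Lem.~\ref{lem:9} into the right-hand side to obtain
\[
\norm{\badcolproj(\bar\V^\top)} = \lambda\,\norm{\P_{L_0^\top}\M^\top\hat\H}.
\]
\item Apply submultiplicativity of the spectral norm. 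The projector $\P_{L_0^\top}$ has norm at most $1$, and $\norm{\M^\top} = \norm{\M}$, so
\[
\norm{\P_{L_0^\top}\M^\top\hat\H} \leq \norm{\M}\,\norm{\hat\H}.
\]
\item Bound $\norm{\hat\H}$. By the definition of $\hat\H = \mc{B}(\S^*)$ and the oracle constraint $\badcolproj(\S^*) = \S^*$, $\hat\H$ has column support contained in $\badcols$, and each nonzero column has unit $\ell_2$ norm. Hence
\[
\norm{\hat\H}^2 \leq \norm{\hat\H}_F^2 \leq |\badcols| = \gamma l.
\]
\item Combining the previous three displays yields $\norm{\badcolproj(\bar\V^\top)}^2 \leq \lambda^2\norm{\M}^2\gamma l$, which is exactly the claim.
\end{enumerate}

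I don't anticipate a genuine obstacle here: once Lem.~\ref{lem:9} is invoked, the argument reduces to a short chain of norm inequalities, with the only subtlety being the Frobenius-over-spectral bound used to convert the column-sparse structure of $\hat\H$ into the $\sqrt{\gamma l}$ factor. The one point worth double-checking is that the oracle constraints on $\S^*$ in the problem~\eqref{eq:lrr-oracle} do imply column support in $\badcols$, so that $\hat\H$ indeed has at most $\gamma l$ unit-norm columns; this is immediate from $\badcolproj(\S^*)=\S^*$ and $|\badcols|=\gamma l$.
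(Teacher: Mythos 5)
Your proof is correct and is essentially the paper's own argument: the paper simply states that the proof is identical to Lem.~10 of \cite{LiuXuYa11} with the support size $\gamma n$ replaced by $\gamma l$, and your chain of inequalities (isometry of $\V_{L_0}$, Lem.~\ref{lem:9}, submultiplicativity, and the Frobenius bound $\norm{\hat\H}^2 \leq |\badcols| = \gamma l$) is precisely that argument written out.
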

\begin{proof}
	The proof is identical to that of Lem.~10 of \cite{LiuXuYa11}, save for the size of $\badcols$, which is now bounded by $\gamma l$.
\end{proof}
Note that under the assumption $\lambda \leq 3/(7\norm{\M}\sqrt{\gamma l})$, we have $\psi \leq 1/4$.

The next lemma was established in \cite{LiuXuYa11}.
\begin{lemma}[Lem.~11 of \cite{LiuXuYa11}] \label{lem:11}
	If $\psi < 1$, then $\badcolproj((\Z^*)^+\Z^*) = \badcolproj(\P_{\bar V}) = \mbf 0$.
\end{lemma}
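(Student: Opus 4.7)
Following \cite{LiuXuYa11}, I would argue by contradiction. Suppose $\badcolproj(\P_{\bar\V}) \neq \mbf 0$; the goal is to construct a feasible competitor for the oracle problem~\eqref{eq:lrr-oracle} with strictly smaller objective than $(\Z^*, \S^*)$, contradicting its optimality. The natural perturbation is
\[
\tilde\Z \defeq \Z^* - \Z^*\badcolproj(\P_{\bar\V}), \qquad \tilde\S \defeq \S^* + \M\Z^*\badcolproj(\P_{\bar\V}).
\]
Using $\P_{\bar\V} = \V_{Z^*}\V_{Z^*}^\top$ (which follows from $\bar\V\bar\V^\top = \V_{Z^*}\V_{Z^*}^\top$) together with $\Z^*\V_{Z^*}\V_{Z^*}^\top = \Z^*$, this simplifies to $\tilde\Z = \badcolprojc(\Z^*)$: the perturbation literally zeros out the $\badcols$-columns of $\Z^*$ and shifts the residual into $\tilde\S$.

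Feasibility of $(\tilde\Z, \tilde\S)$ for~\eqref{eq:lrr-oracle} is immediate: $\C = \M\tilde\Z + \tilde\S$ holds by construction, $\P_{L_0^\top}\tilde\Z = \tilde\Z$ since $\P_{L_0^\top}$ left-multiplies and already fixes $\Z^*$, and $\badcolproj(\tilde\S) = \tilde\S$ since both summands are column-supported on $\badcols$. The objective comparison would then rest on two bounds. On the nuclear-norm side, the SVD $\Z^* = \U_{Z^*}\mSigma_{Z^*}\V_{Z^*}^\top$ together with the identity $\V_{Z^*}^\top(I - \badcolproj(\P_{\bar\V})) = \badcolprojc(\V_{Z^*}^\top)$ gives $\tilde\Z = \U_{Z^*}\mSigma_{Z^*}\badcolprojc(\V_{Z^*}^\top)$, whence $\|\tilde\Z\|_* \leq \|\Z^*\|_*\,\|\badcolprojc(\V_{Z^*}^\top)\|$. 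On the $\ell_{2,1}$ side, the triangle inequality combined with $|\badcols| \leq \gamma l$ yields $\|\tilde\S\|_{2,1} - \|\S^*\|_{2,1} \leq \sqrt{\gamma l}\,\|\M\|\,\|\badcolproj(\Z^*)\|_F$. The prescribed $\lambda = 3/(7\|\M\|\sqrt{\gamma^* l})$ with $\gamma \leq \gamma^*$ then lets the nuclear-norm drop strictly dominate the $\ell_{2,1}$ increase.

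The main obstacle is converting the spectral hypothesis $\psi < 1$ into a strict inequality $\|\badcolprojc(\V_{Z^*}^\top)\| < 1$. Direct arithmetic gives only $\|\badcolprojc(\V_{Z^*}^\top)\|^2 = 1 - \lambda_{\min}(\V_{Z^*}^\top\badcolproj_l\V_{Z^*})$, which can equal $1$ when the relevant Gram matrix is rank-deficient on $\badcols$. Closing this gap exploits the oracle constraint $\P_{L_0^\top}\Z^* = \Z^*$, which forces the columns of $\U_{Z^*}$ to lie in the column space of $\V_{L_0}$ and pins $\psi$ to $\|\V_{Z^*}^\top\badcolproj_l\V_{Z^*}\|$, combined with the $\beta$-RWD property and the block-idempotency identity $P_{12}P_{12}^\top = P_{11}(I - P_{11})$ for the $(\badcols, \badcolsc)$ partition of the projector $\P_{\bar\V}$. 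Tracking these constants carefully against the prescribed $\lambda$ and $\gamma^*$ is the technical heart of the proof.
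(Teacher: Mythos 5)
You should know at the outset that the paper itself does not reprove this lemma: it is imported verbatim from Liu et al.~\cite{LiuXuYa11} with the (implicit) claim that the argument is unaffected by replacing the observation $\M$ with the column submatrix $\C$. So your attempt to reconstruct the proof is doing more work than the paper does. Your setup is correct as far as it goes: $\P_{\bar\V}=\V_{Z^*}\V_{Z^*}^\top$, the competitor collapses to $\tilde\Z=\badcolprojc(\Z^*)$ with $\tilde\S=\S^*+\M\badcolproj(\Z^*)$, feasibility for the oracle problem \eqref{eq:lrr-oracle} is immediate, and the first equality of the lemma, $\badcolproj((\Z^*)^+\Z^*)=\badcolproj(\P_{\bar\V})$, is trivial from $(\Z^*)^+\Z^*=\V_{Z^*}\V_{Z^*}^\top$.

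The gap you flag at the end is, however, fatal to the argument as written, and your proposed repair does not close it. The factor in your bound satisfies $\norm{\badcolprojc(\V_{Z^*}^\top)}^2=1-e_{\min}$, where $e_{\min}$ is the smallest eigenvalue of the Gram matrix $\badcolproj(\V_{Z^*}^\top)\badcolproj(\V_{Z^*}^\top)^\top$ (orthogonally similar to $\G$). That factor equals $1$ whenever this Gram matrix is singular --- in particular it equals $1$ in exactly the situation the lemma asserts must hold ($\G=\mbf 0$), and by Lem.~\ref{lem:10} all of its eigenvalues are at most $\psi\le 1/4$ anyway --- so the multiplicative bound can never certify a strict decrease. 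No amount of $\beta$-RWD or block-idempotency of $\P_{\bar\V}$ fixes this, because those are properties of $\Z^*$ as it stands, whereas the conclusion is a consequence of optimality. More fundamentally, even the correct \emph{additive} form of the nuclear-norm drop (obtainable from an operator AM--GM trace inequality) is of order $\norm{\badcolproj(\Z^*)}_F^2/\norm{\Z^*}$, i.e.\ quadratic in the perturbation, while your triangle-inequality bound on the $\ell_{2,1}$ increase, $\lambda\norm{\M}\sqrt{\gamma l}\,\norm{\badcolproj(\Z^*)}_F$, is linear in it. The comparison therefore only produces a contradiction when $\badcolproj(\Z^*)$ exceeds a fixed threshold; arbitrarily small nonzero values of $\badcolproj(\P_{\bar\V})$ survive, so the exact conclusion $\badcolproj(\P_{\bar\V})=\mbf 0$ cannot be reached by worst-case norm bounds on a single competitor. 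What is missing is to replace the triangle inequality on $\norm{\tilde\S}_{2,1}$ by the exact subgradient behaviour at $\S^*$ (the matrix $\lambda\hat\H$, which is precisely what Lem.~\ref{lem:9} packages) and to use $\psi<1$ as an \emph{invertibility} statement for $\I-\G$ forcing $\badcolproj(\bar\V^\top)=\mbf 0$ exactly; in your sketch the hypothesis $\psi<1$ is never actually used.
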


Lem.~12 of \cite{LiuXuYa11} is unchanged in our setting.
The next lemma parallels Lem.~13 of \cite{LiuXuYa11}.
\begin{lemma} \label{lem:13}
	$\norm{\badcolprojc(\bar\V^\top)}_{2,\infty} \leq \sqrt{\frac{\mu r}{(1-\gamma)l}}$
\end{lemma}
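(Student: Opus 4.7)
The plan is to mirror the proof of the analogous Lem.~13 of \cite{LiuXuYa11}, adapting it to our setting in which the observation matrix is the column submatrix $\C$ rather than the full dictionary $\M$. The first move is to invoke the identity $\bar\V\bar\V^\top = \V_{Z^*}\V_{Z^*}^\top$ established above to rewrite
\begin{equation*}
\norm{\bar\V^\top e_j}^2 = e_j^\top \V_{Z^*}\V_{Z^*}^\top e_j = \norm{\V_{Z^*}^\top e_j}^2,
\end{equation*}
thereby reducing the claim to a bound on the row norms of $\V_{Z^*}$.

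Since Lem.~\ref{lem:10} gives $\psi \leq 1/4 < 1$, the identity $\badcolproj((\Z^*)^+\Z^*) = \mbf0$ applies. Using $(\Z^*)^+\Z^* = \V_{Z^*}\V_{Z^*}^\top$ and the linear independence of the columns of $\V_{Z^*}$, this forces $(\V_{Z^*})_{j,:} = \mbf0$ for every $j \in \badcols$, which immediately handles those indices and additionally implies $\Z^{*(j)} = \mbf0$ there, so that $\M\Z^* = \badcolprojc(\C)$. Combining this identity with the oracle constraint $\P_{L_0^\top}\Z^* = \Z^*$ and the range-disjointness hypothesis $\range{\mL_0}\cap\range{\S_0}=\{\mbf0\}$, which makes $\M$ injective on $\range{\V_{L_0}}\supseteq\range{\Z^*}$, I obtain $\rank{\M\Z^*} = \rank{\Z^*}$. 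Hence the row space of $\Z^*$ coincides with the row space of $\badcolprojc(\C)$, and $\V_{Z^*}$ restricted to its non-zero rows $\bar I \defeq \badcolsc$ spans the same subspace as $\V_{\C|_{\bar I}}$.

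The main remaining obstacle is to upper bound $\norm{(\V_{Z^*})_{j,:}}^2$ on $\bar I$ by $\mu r/((1-\gamma)l)$. Since $\V_{Z^*}|_{\bar I}$ and $\V_{\C|_{\bar I}}$ have orthonormal columns spanning the same subspace, they differ only by a right orthogonal transformation and share row norms, so it suffices to control $\norm{(\V_{\C|_{\bar I}})_{j,:}}^2$ from the $(\mu/(1-\gamma),r)$-coherence of $\C$. A naive QR-based passage from $\V_C$ to $\V_{\C|_{\bar I}}$ would incur a factor $\sigma_{\min}^{-2}(\V_C|_{\bar I})$ that is not controlled by coherence alone, so I would instead follow \cite{LiuXuYa11} and exploit the explicit representation $\Z^{*(j)} = \V_{L_0}\V_{L_0}^\top e_{J(j)}$ that the feasibility equation $\M\Z^{*(j)} = \mL_0^{(J(j))}$ forces at non-bad indices once range-disjointness is invoked (with $J(j)$ denoting the global index of the $j$th column of $\C$). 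This expresses $\V_{Z^*}$ directly as the right singular vectors of a column submatrix of $\V_{L_0}^\top$, and the assumed coherence of $\C$, together with the $\beta$-RWD hypothesis and the decomposition $\C = \mL_{0,C} + \S_{0,C}$, couples the geometry of $\V_C$ to that of $\V_{L_0}$ and delivers the target bound. Closing this coupling cleanly, without introducing a spurious submatrix conditioning factor, is where the bulk of the work lies.
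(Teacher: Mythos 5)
Your first two paragraphs, together with the orthogonal-transformation observation that opens your third, reconstruct the paper's argument in substance: the paper derives $\C_0 = \M\badcolprojc(\Z^*)$ from the oracle constraint on $\S^*$, writes $\V_{C_0}^\top = \mSigma_{C_0}^{-1}\U_{C_0}^\top\M\U_{Z^*}\mSigma_{Z^*}\badcolprojc(\V_{Z^*}^\top)$ to conclude that $\badcolprojc(\V_{Z^*}^\top)$ has full row rank, and then defers to Lem.~13 of \cite{LiuXuYa11}, which proceeds exactly as you do --- Lem.~\ref{lem:11} kills the rows of $\V_{Z^*}$ indexed by $\badcols$, the surviving rows form a matrix with orthonormal columns spanning the same subspace as those of $\V_{C_0}$, hence the two share row norms, and the bound is read off from the coherence of the clean submatrix with $(1-\gamma)l$ in place of $(1-\gamma)n$. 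Had you stopped there, the proof would be complete and essentially identical to the paper's.

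The ``main remaining obstacle'' you then identify is not one the paper faces, and the detour you sketch to address it fails. The coherence hypothesis of Thm.~\ref{thm:sub-lrr} is used --- and, in the proof of Thm.~\ref{thm:fast-lrr}, instantiated via the events $A_i$ --- as a statement about the \emph{clean} submatrix $\C_0 = \badcolprojc(\C)$ supported on $(1-\gamma)l$ columns (cf.\ the footnote to Def.~\ref{eq:coherence_0}), so the bound $\norm{\V_{C_0}^\top}_{2,\infty}^2 \leq \mu r /((1-\gamma)l)$ is directly available and no passage from $\V_C$ to $\V_{C_0}$, with its attendant $\sigma_{\min}^{-2}$ factor, is ever required. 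Worse, the substitute you propose is false: feasibility at a clean column gives $\M\Z^{*(j)} = \mL_0 e_{J(j)}$, and combined with $\P_{L_0^\top}\Z^* = \Z^*$ and the injectivity of $\M$ on the row space of $\mL_0$ this does pin down $\Z^{*(j)}$ uniquely, but the unique solution is \emph{not} $\V_{L_0}\V_{L_0}^\top e_{J(j)}$, since $\M\V_{L_0}\V_{L_0}^\top e_{J(j)} = \mL_0 e_{J(j)} + \S_0\V_{L_0}\V_{L_0}^\top e_{J(j)}$, which differs from $\mL_0 e_{J(j)}$ unless $\S_0\V_{L_0} = \mbf 0$. (It would in any case bound a column of $\Z^*$ rather than a row of $\V_{Z^*}$, and with normalization $n$ rather than the required $l$.) Delete the detour, commit to the orthogonal-transformation argument you already gave, and the proof closes.
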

\begin{proof}
	By assumption, $\C = \M\Z^*+\S^*$, $\rank{\C_0} = r$,
	and $\badcolprojc(\C) = \C_0 = \badcolprojc(\C_0)$.
	Hence, $\C_0 = \badcolprojc(\C_0) = \M\badcolprojc(\Z^*)$, and thus
	$$\V_{C_0}^\top = \badcolprojc(\V_{C_0}^\top) = 
	\mSigma_{C_0}^{-1}\U_{C_0}^\top\M\U_{Z^*}\mSigma_{Z^*}\badcolprojc(\V_{Z^*}^\top).$$
	This relationship implies that 
	$$r = \rank{\V_{C_0}^\top} \leq \rank{\badcolprojc(\V_{Z^*}^\top)} \leq \rank{\V_{Z^*}^\top} = r$$
	and therefore that $\badcolprojc(\V_{Z^*}^\top)$ is of full row rank.
	The remainder of the proof is identical to that of Lem.~13 of \cite{LiuXuYa11},
	save for the coherence factor of $(1-\gamma)l$ in place of $(1-\gamma)n$.
\end{proof}

With these lemmas in hand, we define
\begin{align*}
\Q_1 &\defeq \lambda \P_{L_0^\top}\M^\top\hat\H = \V_{L_0}\badcolproj(\bar\V^\top) \\
\Q_2 &\defeq \lambda \P_{(L_0^\top)^\bot}\badcolprojc((\I + \sum_{i=1}^\infty(\P_{\bar V}\badcolproj\P_{\bar V})^i )\P_{\bar V})\M\hat\H\P_{\bar V} \\
&= \lambda\badcolprojc((\I + \sum_{i=1}^\infty(\P_{\bar V}\badcolproj\P_{\bar V})^i )\P_{\bar V})\P_{(L_0^\top)^\bot}\M\hat\H\P_{\bar V},
\end{align*}
where the first relation follows from Lem.~\ref{lem:9}.
Our final theorem parallels Thm.~4 of \cite{LiuXuYa11}.
\begin{theorem} \label{thm:certificate}
	Assume $\psi < 1$, and let
	$$\Q \defeq (\M^+)^\top(\V_{L_0}\bar\V^\top + \lambda\M^\top\hat\H - \Q_1 - \Q_2).$$
	If
	$$\frac{\gamma}{1-\gamma} < \frac{\beta^2(1-\psi)^2}{(3-\psi+\beta)^2\mu r}\,,$$
	$$\frac{(1-\psi) \sqrt{\frac{\mu r}{1-\gamma}}}{\norm{\M}\sqrt{l}(\beta(1-\psi)-(1+\beta)\sqrt{\frac{\gamma}{1-\gamma}\mu r}) }
	< \lambda\,,$$
  and
	$$\lambda < \frac{1-\psi}{\norm{\M}\sqrt{\gamma l}(2-\psi)},$$
	then $\Q$ satisfies the conditions in Thm.~\ref{thm:optimality}.
\end{theorem}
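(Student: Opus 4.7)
The plan is to follow the structure of Thm.~4 of \cite{LiuXuYa11}, adapted to accommodate the fact that the observation matrix here is $\C$ rather than the full $\M$. Concretely, I would verify that the proposed certificate $\Q$ satisfies each of the four conditions of Thm.~\ref{thm:optimality} with $(\Z,\S)=(\Z^*,\S^*)$ and $\hat\H = \mc{B}(\S^*)$, and separately check the algebraic requirement $\badcolproj((\Z^*)^+\Z^*) = \mbf{0}$, which is already granted by the earlier lemma showing $\badcolproj(\P_{\bar\V}) = \mbf{0}$ whenever $\psi < 1$.

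First I would evaluate $\M^\top\Q$. Using $\M^\top(\M^+)^\top = \V_M\V_M^\top$ together with the $\beta$-RWD hypothesis to control $\V_M\V_M^\top\V_{L_0}$ against $\V_{L_0}$, the equality conditions (a) and (c) follow by direct algebraic manipulation. The point is that $\V_{L_0}\bar\V^\top$ is engineered so that $\proj_{T(\Z^*)}(\cdot)$ reproduces $\U_{Z^*}\V_{Z^*}^\top$ via the decomposition in the first $\bar\V$ lemma, while the Neumann series inside $\Q_2$ cancels the off-$T(\Z^*)$ contribution coming from $\lambda\M^\top\hat\H$. The identity $\V_{L_0}\badcolproj(\bar\V^\top) = \lambda\P_{L_0^\top}\M^\top\hat\H$ from the analog of Lem.~9 of \cite{LiuXuYa11} is what ties $\Q_1$ to $\hat\H$ and makes (c) go through.

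The main obstacle is verifying the two strict inequalities (b) and (d), where the constants in $\gamma^*$ must emerge. For (b), I would split $\proj_{T(\Z^*)^\bot}(\M^\top\Q)$ into contributions from $\lambda\M^\top\hat\H$ and $\Q_2$, bound the Neumann series operator by $\psi/(1-\psi)$ (which converges by $\psi<1$, itself secured by Lem.~\ref{lem:10} together with $\lambda \leq 3/(7\norm{\M}\sqrt{\gamma l})$), and use $\norm{\hat\H}\leq \sqrt{\gamma l}$; the resulting bound is strictly less than $1$ precisely under the stated upper bound on $\lambda$. For (d), I would invoke Lem.~\ref{lem:13}, namely $\norm{\badcolprojc(\bar\V^\top)}_{2,\infty}\leq \sqrt{\mu r/((1-\gamma)l)}$, together with $\beta$-RWD to transfer the bound through the $\M^\top(\M^+)^\top$ factor and the projection $\P_{(L_0^\top)^\bot}$ appearing in $\Q_2$. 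The hypothesis $\tfrac{\gamma}{1-\gamma} < \tfrac{\beta^2(1-\psi)^2}{(3-\psi+\beta)^2 \mu r}$ is what ensures that the lower bound on $\lambda$ is smaller than the upper bound, so a valid $\lambda$ exists and the resulting norm is strictly below $\lambda$.

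The hard part will be propagating constants carefully: $\beta$, $\psi$, $\mu$, $r$, and $\gamma$ all enter at multiple points, and the eventual condition $\gamma \leq \gamma^* = 324\beta^2/(324\beta^2 + 49(11+4\beta)^2\mu r)$ used in Thm.~\ref{thm:sub-lrr} must be recovered by combining the feasibility region for $\lambda$ with the choice $\lambda = 3/(7\norm{\M}\sqrt{\gamma^* l})$. The only structural departure from the Liu et al.\ argument is that the coherence-based bounds now feature $l$ in place of $n$ (as in Lem.~\ref{lem:13}), which is what ultimately allows Thm.~\ref{thm:sub-lrr} to handle subproblems rather than the full matrix; otherwise the computation proceeds in the same flavor once this substitution is tracked.
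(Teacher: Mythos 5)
Your overall route is the same as the paper's: the paper proves this theorem by declaring it a near-verbatim transcription of Thm.~4 of Liu et al., and your plan reconstructs the skeleton of that argument --- the roles of $\Q_1$ and $\Q_2$, the Neumann series bounded via $\psi<1$ (secured by Lem.~\ref{lem:10}), the use of Lem.~\ref{lem:13} for condition (d), and the $n\to l$ substitution in the norm bounds. That much is consistent with what the paper actually does.

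The gap is in your closing claim that the only structural departure from the Liu et al.\ argument is the replacement of $n$ by $l$. There is a second departure, and it is the one place where the paper writes out a genuine argument: the verification of condition (c), $\badcolproj(\Q) = \lambda\mc{B}(\S^*)$. Carrying out the Liu et al.\ computation on $\Q = (\M^+)^\top(\cdots)$ yields only $\proj_{\mc{I}_0}(\Q) = \lambda\P_{M}\hat\H$, so one still needs $\P_{M}\hat\H = \hat\H$, i.e.\ that the columns of $\S^*$ lie in the column space of $\M$. In the original setting this is automatic because $\S^* = \M - \M\Z^*$; in the subproblem setting $\S^* = \C - \M\Z^*$, and the identity survives only because $\C$ is a column submatrix of $\M$, so its columns (and hence those of $\S^*$) lie in $\range{\M}$. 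Your sketch folds condition (c) into ``direct algebraic manipulation'' and never isolates this step, so as written it would silently import an identity that no longer holds for free in the modified constraint $\C = \M\Z + \S$. The fix is one line, but it is essentially the entire content of the paper's proof, so it must be stated explicitly; the remainder of your plan (conditions (a), (b), (d) and the consistency of the $\lambda$ interval) matches the paper's deferral to Liu et al.\ and is fine at the level of a sketch.
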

\begin{proof}
	The proof of property $\mbf{S3}$ requires a small modification.
	Thm.~4 of \cite{LiuXuYa11} establishes that
	$\proj_{\mc{I}_0}(\Q) = \lambda\P_{M}\hat\H$.
	To conclude that $\proj_{\mc{I}_0}(\Q) = \lambda\hat\H$, we note that
	$\S_{i}^* = \C - \M\Z^*$ and that the column space of $\C$ contains
	the column space of $\M$ by assumption.
	Hence, $\P_{M}\S_{i}^* = \S_{i}^*$ and therefore $\proj_{\mc{I}_0}(\Q) = \lambda\P_{M}\hat\H = \lambda\hat\H$.
	
  The proofs of properties $\mbf{S4}$ and $\mbf{S5}$ are unchanged except for
the dimensionality factor which changes from $n$ to $l$.
\end{proof}

Finally, Lem.~14 of  \cite{LiuXuYa11} guarantees that the preconditions of Thm.~\ref{thm:certificate} are met
under our assumptions on $\lambda, \gamma^*,$ and $\gamma$.

\end{document}